\documentclass{article} % For LaTeX2e
\usepackage{iclr2027_conference,times}
\renewcommand{\headrulewidth}{0pt} % to remove the line at the top of the page

\newcommand{\norm}[1]{\left\lVert#1\right\rVert_2}
\usepackage{amsmath,amsfonts,bm}

\def\eqref#1{equation~\ref{#1}}
\def\1{\bm{1}}

\def\mI{{\bm{I}}}

\DeclareMathAlphabet{\mathsfit}{\encodingdefault}{\sfdefault}{m}{sl}
\SetMathAlphabet{\mathsfit}{bold}{\encodingdefault}{\sfdefault}{bx}{n}

\usepackage{hyperref}
\usepackage{url}
\usepackage{graphicx}
\usepackage{wrapfig}
\usepackage{amssymb}
\usepackage{booktabs}
\usepackage{xcolor}
\usepackage{bm}
\usepackage{array}
\usepackage{longtable}
\usepackage{fvextra}
\usepackage{multirow}
\usepackage{tabularx}
\usepackage{amsthm}
\usepackage[svgnames]{xcolor}

\newtheorem{proposition}{Proposition}
\newtheorem{lemma}{Lemma}
\newtheorem{definition}{Definition}
\newtheorem{corollary}{Corollary}

\title{Causal and Interpretable Structures in LLM Compositional Tasks}

\author{%
  \textbf{Gurbir Arora}\textsuperscript{$1$},\,
  \textbf{Toni J.B. Liu}\textsuperscript{$1$},\,
  \textbf{Jiajun Bao}\textsuperscript{$1$},\,
  \textbf{Raphaël Sarfati}\textsuperscript{$1,2$},\,
  \textbf{Christopher J. Earls}\textsuperscript{$1,2$}
\\
\\
  \textsuperscript{$1$}Cornell University, USA\qquad
  \textsuperscript{$2$}Goodfire AI, USA
\\
  \small{
    \textbf{Correspondence:} \href{mailto:g.arora@cornell.edu}{g.arora@cornell.edu}
  }
}

\iclrfinalcopy % Uncomment for camera-ready version, but NOT for submission.
\begin{document}

\maketitle

\begin{abstract}
Large language models are able to solve tasks whose answers depend on not only individual input tokens, but also on relations among them. How is such relational information represented and processed across transformer layers? We study activations from ensembles of prompts that require inferring relationships between three tokens corresponding to a cyclic concept (months, hours, weekdays, and musical notes) to correctly predict the next token. Across model families (Llama, Qwen, Gemma, and Mistral) and cyclic concepts, we find a consistent layerwise progression in how the joint dependence among the tokens is geometrically organized and causally used: intermediate layers use a joint representation based on the inferred relationship between two tokens, while later layers use a joint representation associated with all three tokens to correctly complete the task. We also find other relationships between tokens that are geometrically structured but remain causally inert in the next-token prediction. Crucially, when taken together, these geometric and causal investigations reveal the representation-level mechanism that progressively organizes and composes the relational information to form the answer. More surprisingly, restricting the models to such causally relevant joint representations improves next-token prediction accuracy.
\end{abstract}

\section{Introduction}
Many tasks require a large language model (LLM) to use relationships between multiple input tokens. Consider the prompt, ``\emph{Hi John, the conference is scheduled between January 13 and March 13. This is the same time as between May 13 and}''. The correct next-token prediction depends crucially on three tokens: \emph{January}, \emph{March}, and \emph{May}. Let $A$, $B$, and $C$ denote these three tokens, respectively, and define $\gamma=B-A\;(\mathrm{mod\;}12)$. The correct answer is then $D=C+\gamma\;(\mathrm{mod\;}12)$. The individual representations of $A$, $B$, and $C$ lie on the low-dimensional manifold associated with calendar months~\citep{engels2025not}. We ask whether computationally relevant quantities arising jointly from these tokens, such as $\gamma$ or $D$, also have geometric structure, and whether that structure is used in the next-token prediction.

We vary $A,B,C$ independently over calendar months while keeping everything else fixed. Across this ensemble of prompts, the last-token activation varies with $A$, $B$, and $C$ individually and with the particular combinations in which they co-occur. These sources of variation are superposed in the full activation ensemble. We use the functional analysis of variance (ANOVA) decomposition to separate the resulting activation ensemble into terms associated with individual variables, pairwise interactions, and their three-way interaction~\citep{hoeffding}. We treat the interaction terms as geometric objects and study their organization and causal role across the layers of an LLM.

We study this question across several cyclic concepts: months, hours, weekdays, and musical notes. The cyclic structure gives us known relational quantities. In particular, $\gamma=B-A$ is a pairwise relation, while the answer $D=C+\gamma$ depends jointly on all three inputs. Importantly, $\gamma$ is never supplied explicitly and must instead be inferred from $A$ and $B$. This allows us to ask how the joint dependence associated with $\gamma$ and $D$ is represented and used across network depth.

Our contributions are: \textbf{(a) Geometric structure.} We find that the pairwise interaction between $A$ and $B$ is organized by $\gamma$ in the middle layers. Other pairwise interactions are also organized by their corresponding pairwise differences. The three-way interaction between $A$, $B$, and $C$ becomes organized by $D$ at a later depth. \textbf{(b) Causality.} We show that the representation associated with $A,B$ is causally relevant in the middle layers, while the representation associated with $A,B,C$ becomes causally relevant at later layers. \textbf{(c) Improved performance.} Retaining only the causally relevant three-way interaction improves the performance on the task. \textbf{(d) Relationship transfer.} We show that the relationship encoded by interaction terms transfers across cyclic domains, such as from hours to months.

\subsection{Related work}
\paragraph{Geometry of concepts.}
Prior work has identified low-dimensional geometric structure associated with individual concepts in latent activations~\citep{engels2025not, modell2025originsrepresentationmanifoldslarge, park2025the, karkada2026symmetries, hu2026language, prieto2026from}. Other work has studied how representational geometry evolves during computation and how interventions along that geometry affect model behavior~\citep{gurnee2025when, wurgaft2026manifold, sarfati2026shapebeliefsgeometrydynamics}. We instead study the geometry and causality of joint dependence between multiple tokens.

\begin{figure}[t]
    \centering
    \includegraphics[width=\linewidth]{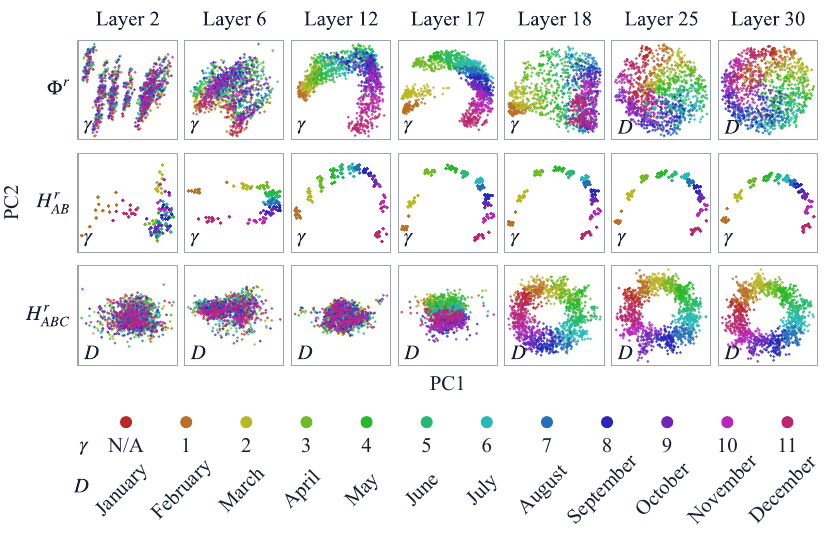}
    \caption{Two-dimensional visualization using the first two principal components for the activation ensemble $\Phi^r$ and its decomposed interaction ensembles $H_{AB}^r$ and $H_{ABC}^r$ for the prompt template in Section~\ref{sec:construction} using Llama-3.1-8B. Here, $\gamma=B-A\mod 12$ and the expected answer is $D=C+\gamma\mod 12$. Before fitting the principal components, we exclude prompts satisfying $A=B$, $B=C$, or $C=A$ (see Appendix~\ref{app:coincident-variables}). The bottom left of each panel shows the coloring according to either $\gamma$ or $D$. $H_{AB}^r$ gets organized by $\gamma$ in early layers. Until layer 17, $H_{ABC}^r$ is unstructured but becomes organized by $D$ at layer 18. See Section~\ref{sec:fourier} for a quantitative analysis.}
    \label{fig:decomposition}
\end{figure}

\paragraph{Relational representations and arithmetic computation.}
Prior work has shown that latent activations contain structured representations of relations between inputs, including approximately linear transformations, as well as distributed representations that bind entities or variables to relational roles~\citep{hernandez2024linearity, merullo-etal-2024-language, feng2024how, davies2023discoveringvariablebindingcircuitry, dai-etal-2026-cell, todd2025algebra, wang-etal-2024-locating}. Related work on arithmetic has identified structured numerical representations and computational mechanisms, including Fourier and periodic structure in transformers trained on modular arithmetic and in pretrained language models~\citep{nanda2023progress, furuta2024empiricalinterpretationinternalcircuits, zhou2024pretrained, kantamneni2025languagemodelsusetrigonometry, levy-geva-2025-language}. A closely related work is that of~\citet{feucht2026arithmeticwildllamauses}, who study tasks involving an explicit offset $\gamma$ and a concept $C$ from a cyclic domain. They identify a universal base-10 addition mechanism in Llama-3.1-8B by characterizing MLP neurons associated with the task's Fourier structure. In contrast, we work at the representation level. The relation \(\gamma\) in our setting is not supplied explicitly, but arises from the joint dependence of two independently varied inputs. We isolate the component of the residual-stream representation associated with this joint dependence and study its geometry and causal role across layers.

\paragraph{Causal interventions.}

The presence of structured or decodable information in an activation does not by itself establish that the model uses that information to perform the task. Causal interventions and activation patching have been used in several works to test the functional role of such representations~\citep{vig2020, geiger2021, geiger2024findingalignmentsinterpretablecausal, zhang2024towards, arora-etal-2024-causalgym}. Our experiments similarly intervene directly on joint representations of multiple input tokens.

\section{Setup}\label{sec:setup}
In the main text, all results are reported using base Llama-3.1-8B~\citep{llama_8b}, which contains 32 layers indexed by $\ell \in \{0,\ldots,31\}$. In the appendices, we reproduce all our results on base models: Llama-3.2-3B~\citep{meta2024llama32}, Qwen-2.5-7B~\citep{qwen2.5}, Qwen-3-8B~\citep{qwen3technicalreport}, Gemma-2-9B~\citep{gemma_2024}, Gemma-3-12B~\citep{gemma_2025}, Mistral-small-24B-Base-2501~\citep{mistral}. In each model, we analyze residual-stream activations after every transformer block.

\subsection{Construction of prompts}
\label{sec:construction}
Several other works have used controlled prompt variations to reveal manifolds associated with individual concepts~\citep{engels2025not, kantamneni2025languagemodelsusetrigonometry}. We propose a similar experimental design, but we aim to elicit a relationship between concepts. We consider several cyclic concepts---months, weekdays, hours, musical notes---and let $\mathcal{X}$ be the associated vocabulary. In the main text, we discuss the months domain, $\mathcal{X}:=\{\text{January},\ldots,\text{December}\}$. Consider the variables $(A, B, C) \in \mathcal{X}^3$ along with the prompt
% \Chris{the foregoing is a bit confusing to me...why introduce a dictionary, over the undefined vocabulary...are you defining it as a sub-set? not clear to me...I also find the overloading of ``domain'' to be confusing}:
\begin{verbatim}
Hi {name}, the {noun} is scheduled between {A} {dates} 
and {B} {dates}. This is the same time as between {C} {dates} and
\end{verbatim}
Here, \verb|name| is chosen from the most common names of the past century~\citep{ssa_babynames_century_2026}, \verb|noun| varies over synonyms of \verb|conference|, while \verb|dates| $\in \{1,\ldots,28\}$ (see Appendix~\ref{app:variables} for a list). We call each combination of $(\verb|name|,\verb|noun|,\verb|dates|)$ a \textit{replicate} and label it by $r$. These replicates define prompt variations that leave the underlying task unchanged. We further require that all replicate variables tokenize to the same length so that positional offsets do not introduce a confound.  We randomly subsample 10 such replicates. Unless stated otherwise, we average over the replicates when we report any quantity.

We design prompts associated with each cyclic concept such that the answer to the task is the next token. We measure the performance by computing the top-$1$ and top-$3$ accuracy, where top-$k$ accuracy is the fraction of prompts for which the correct answer features in the $k$ largest logits. The top-$1$ accuracy of Llama-3.1-8B is $62.2\% \pm 5.1\%$, and the top-$3$ accuracy is $85.4\% \pm 3.3\%$, where the variation is reported as the standard deviation across replicates $r$\footnote{The accuracies are computed over all prompts with distinct variables $(A,B,C)$. See Appendix~\ref{app:coincident-variables}.}. Because top-$3$ accuracy varies less with larger offsets of $\gamma=B-A\gtrsim6$ than top-$1$ accuracy (Appendix~\ref{app:capabilities}), we use top-$3$ accuracy as the primary performance metric in the main text. The analogous results using top-$1$ accuracy are reported in the appendices.

% \begin{table}[t]
% \centering
% \caption{
% Top-$1$ and top-$3$ accuracy of Llama-3.1-8B as a function of
% $\gamma=B-A \pmod{12}$. Values report mean accuracy across replicates $r$,
% with variation reported as $\pm$ standard deviation.
% }
% \label{tab:llama-accuracy}
% \resizebox{\linewidth}{!}{
% \begin{tabular}{c cccccccccccc c}
% \toprule
% $\gamma$
% & 0 & 1 & 2 & 3 & 4 & 5 & 6 & 7 & 8 & 9 & 10 & 11
% & Overall \\
% \midrule
% Top-$1$ (\%)
% & 100 $\pm$ 0 & 100 $\pm$ 0 & 100 $\pm$ 0 & 92.6 $\pm$ 0.9 & 87.0 $\pm$ 1.6 & 56.7 $\pm$ 2.5
% & 96.8 $\pm$ 0.6 & 20.0 $\pm$ 1.3 & 50.0 $\pm$ 2.9 & 23.5 $\pm$ 1.6 & 44.4 $\pm$ 11.9 & 55.3 $\pm$ 12.5
% & 68.85 $\pm$ 0.73 \\
% Top-$3$ (\%)
% & XX.X & XX.X & XX.X & XX.X & XX.X & XX.X
% & XX.X & XX.X & XX.X & XX.X & XX.X & XX.X
% & 85.61 $\pm$ 0.44 \\
% \bottomrule
% \end{tabular}
% }
% \end{table}
\subsection{Ensemble decomposition}\label{sec:decompose}
Let $\bm{\phi}^{r}_{\ell}(A,B,C)\in\mathbb{R}^{d_{\text{model}}}$ denote the last-token activation at layer $\ell$ for replicate $r$. We define the corresponding activation ensemble as
% \Chris{would we then want to call this a ``replicate ensemble''?}
% \Chris{before a replicate seemed to be an sampled triple A,B,C, but here you seem to be using it to denote the iterator, r...}
\begin{equation}\label{eq:ensemble}
    \Phi^{r}_{\ell}
    :=
    \left(
        \bm{\phi}^{r}_{\ell}(A,B,C)
    \right)_{(A,B,C)\in\mathcal{X}^3}.
\end{equation}
Since we report all quantities as a function of layer, we drop the label $\ell$ for brevity. We decompose each activation as
\begin{equation}\label{eq:decomp}
    \bm{\phi}^r(A,B,C) = \underbrace{\bm{\mu}^r}_{\text{mean}} + \underbrace{\bm{h}_{A}^r + \bm{h}_{B}^r + \bm{h}_{C}^r}_{\text{additive terms}} + \underbrace{\bm{h}_{AB}^r + \bm{h}_{BC}^r + \bm{h}_{CA}^r + \bm{h}_{ABC}^r}_{\text{interaction terms}},
\end{equation}

% \begin{equation}\label{eq:decomp}
% \bm{\phi}^r
% =
% \underbrace{
%   \textcolor{termMu}{\bm{\mu}^r}
% }_{\text{mean}}
% +
% \underbrace{
%   \textcolor{termA}{\bm{h}_{A}^r}
%   +
%   \textcolor{termB}{\bm{h}_{B}^r}
%   +
%   \textcolor{termC}{\bm{h}_{C}^r}
% }_{\text{additive terms}}
% +
% \underbrace{
%   \textcolor{termAB}{\bm{h}_{AB}^r}
%   +
%   \textcolor{termBC}{\bm{h}_{BC}^r}
%   +
%   \textcolor{termAC}{\bm{h}_{CA}^r}
%   +
%   \textcolor{termABC}{\bm{h}_{ABC}^r}
% }_{\text{interaction terms}},
% \end{equation}
where every term is defined as an average with respect to the ensemble $\Phi^{r}$:
\begin{equation}
    \begin{aligned}
    \bm{\mu}^r &:= \mathbb{E}_{ABC}(\bm{\phi}^r),\\
    \bm{h}_{A}^r &:= \mathbb{E}_{BC}(\bm{\phi}^r) - \bm{\mu}^r,\\
    \bm{h}_{AB}^r &:= \mathbb{E}_{C}(\bm{\phi}^r) - \bm{\mu}^r - \bm{h}_{A}^r - \bm{h}_{B}^r,\\
    \bm{h}_{ABC}^r &:= \bm{\phi}^r - \bm{\mu}^r - \bm{h}_{A}^r - \bm{h}_{B}^r - \bm{h}_{C}^r - \bm{h}_{AB}^r - \bm{h}_{BC}^r - \bm{h}_{CA}^r.
\end{aligned}
\end{equation}

% \begin{equation}
% \begin{aligned}
% \textcolor{termMu}{\bm{\mu}^r}
% &:= \mathbb{E}_{ABC}(\bm{\phi}^r),\\
% %
% \textcolor{termA}{\bm{h}_{A}^r}
% &:= \mathbb{E}_{BC}(\bm{\phi}^r)
% - \textcolor{termMu}{\bm{\mu}^r},\\
% %
% \textcolor{termAB}{\bm{h}_{AB}^r}
% &:= \mathbb{E}_{C}(\bm{\phi}^r)
% - \textcolor{termMu}{\bm{\mu}^r}
% - \textcolor{termA}{\bm{h}_{A}^r}
% - \textcolor{termB}{\bm{h}_{B}^r},\\
% %
% \textcolor{termABC}{\bm{h}_{ABC}^r}
% &:= \bm{\phi}^r
% - \textcolor{termMu}{\bm{\mu}^r}
% - \textcolor{termA}{\bm{h}_{A}^r}
% - \textcolor{termB}{\bm{h}_{B}^r}
% - \textcolor{termC}{\bm{h}_{C}^r}
% - \textcolor{termAB}{\bm{h}_{AB}^r}
% - \textcolor{termBC}{\bm{h}_{BC}^r}
% - \textcolor{termAC}{\bm{h}_{CA}^r}.
% \end{aligned}
% \end{equation}

Analogously to Equation~\ref{eq:ensemble}, we define the ensemble associated with decomposed terms as
\[
H_A^r
=
\left(\bm{h}_A^r\right)_{A\in\mathcal X},
\qquad
H_{AB}^r
=
\left(\bm{h}_{AB}^r\right)_{(A,B)\in\mathcal X^2},
\quad
\text{and}
\quad
H_{ABC}^r
=
\left(\bm{h}_{ABC}^r\right)_{(A,B,C)\in\mathcal X^3}.
\]
% \[
% \textcolor{termA}{H_A^r}
% =
% \left(\textcolor{termA}{\bm{h}_A^r}\right)_{A\in\mathcal X},
% \qquad
% \textcolor{termAB}{H_{AB}^r}
% =
% \left(\textcolor{termAB}{\bm{h}_{AB}^r}\right)_{(A,B)\in\mathcal X^2},
% \quad
% \text{and}
% \quad
% \textcolor{termABC}{H_{ABC}^r}
% =
% \left(\textcolor{termABC}{\bm{h}_{ABC}^r}\right)_{(A,B,C)\in\mathcal X^3}.
% \]
The remaining terms are defined by cyclic permutation of the expressions above. Intuitively, $\bm{\mu}^r$ is the center of the activation ensemble, $H_A^r$ captures the dependence on $A$ alone, $H_{AB}^r$ captures the joint dependence on $A$ and $B$ that remains after averaging over $C$ and removing their individual effects, and $H_{ABC}^r$ contains the residual three-way dependence after accounting for all lower-order terms. We refer to $H_{AB}^r$, $H_{BC}^r$, $H_{CA}^r$, and $H_{ABC}^r$ as the interaction ensembles.

We emphasize that Equation~\ref{eq:decomp} is an exact identity rather than an approximation. It is known as the functional ANOVA decomposition~\citep{hoeffding, mcbook}. The decomposed components are pairwise orthogonal under the ensemble-averaged inner product (Appendix~\ref{app:anova}). We do not apply the inferential machinery of ANOVA, since the associated questions of statistical significance are orthogonal to the goal of this paper. Figure~\ref{fig:decomposition} shows the first two principal components of $\Phi^r$, $H_{AB}^r$, and $H_{ABC}^r$. The remainder of the paper analyzes the interaction ensembles as geometric objects to study their structure and causal role across network depth.

\section{Geometric structure of interaction terms}\label{sec:fourier}
The ANOVA decomposition described in Equation~\ref{eq:decomp} breaks the ensemble into its constituent parts. Figure~\ref{fig:decomposition} visually suggests that $H_{AB}^r$ can be parametrized by $\gamma = B-A$ and similarly $H_{ABC}^r$ can be parametrized by $D=C+\gamma$. In this section, we first rigorously quantify the dependence of interaction terms on $\gamma$ or $D$ by exploiting the cyclic nature of $A,B,C$ using the discrete Fourier transform in Section~\ref{sec:dft}. Second, we analyze the norm of vectors in each interaction ensemble in Section~\ref{sec:energy} to check for ensembles that are highly organized but have negligible norm overall.

\subsection{Fourier Transform}\label{sec:dft}
We perform a discrete Fourier transform (DFT) of every interaction term for every replicate $r$. Consider $H_{AB}^r$ as an example. Its DFT is given by
\begin{equation}
    \widehat{\bm h}_{k_Ak_B}^r = \sum_{A,B}\bm h^r_{AB}e^{-2\pi i(k_A A + k_BB)/12},
\end{equation}
where we identify the months with $A,B \in \{0,\ldots,11\}$ and $k_A,k_B$ are the associated Fourier modes. Let $\gamma = B-A$. If $\bm h_{AB}^r$ depends only on $\gamma$, that is, $\bm h_{AB}^r\equiv \bm f(\gamma)$, then its DFT
\[
\widehat{\bm h}_{k_Ak_B}^r = \sum_{\gamma}\bm f(\gamma)e^{-2\pi ik_B\gamma/12}\sum_{A}e^{-2\pi i(k_A +k_B)A/12}
\]
vanishes unless $k_A +k_B =0\mod{12}$. Thus concentration on modes satisfying $k_A = -k_B$\footnote{Strictly, the condition is $k_A+k_B=0 \;\mathrm{mod}\;{12}$; all frequency equalities are understood modulo 12.} quantifies the extent to which $H_{AB}^r$ is organized according to $\gamma$. Therefore we define \textit{mode fraction} as
\begin{equation}\label{eq:mode}
m(H_{AB}^r) := \frac{\sum_{k_A=-k_B}\norm{\widehat{\bm h}^r_{k_Ak_B}}^2}{\sum_{k_A,k_B}\norm{\widehat{\bm h}^r_{k_Ak_B}}^2}.
\end{equation}
The mode fraction $m(H^r_{AB})\in [0,1]$ measures the degree of organization of $H_{AB}^r$ by $\gamma$, with $m=1$ when the ensemble depends on $\gamma$ alone. We define an analogous mode fraction for $H_{BC}^r$, $H_{CA}^r$, and $H_{ABC}^r$ to probe for their dependence on $\gamma' = C-B$, $\gamma''=A-C$, and $D = C + B-A$ respectively. For the latter, the modes under investigation are $(k_A,k_B,k_C) = (-k,k,k)$.

Dependence on $\gamma$ cannot occur in terms lower-order than $H_{AB}^r$, and dependence on $D$ cannot occur in terms lower-order than $H_{ABC}^r$. Thus, if the residual stream contains structure organized by $\gamma$ or $D$, that structure must reside in $H_{AB}^r$ or $H_{ABC}^r$, respectively. Consistently, we find large mode fractions for both interaction ensembles but at different depths. Importantly, we also find that all second-order interaction terms are strongly organized by their corresponding pairwise differences, and that $H_{ABC}^r$ undergoes a sharp increase in organization by $D$ around layers 17--18 (Figure~\ref{fig:fourier}(a)). At the same depth, the mode fractions of $H_{BC}^r$ and $H_{CA}^r$ begin to decrease, while $m(H_{AB}^r)$ remains large throughout the depth of the transformer. Results across models and domains appear in Appendix~\ref{app:mode-fraction}.

Control experiments detailed in Appendix~\ref{app:controls} suggest that the pairwise-difference organization of the second-order interaction ensembles can arise from the co-occurrence of the corresponding tokens. This is in agreement with~\citet{karkada2026symmetries}. In contrast, $H_{ABC}^r$ is not organized by $D$ when the task structure is removed from the prompt, even when all three tokens $A,B,C$ are present.

\begin{figure}[t]
    \centering
    \includegraphics[width=\linewidth]{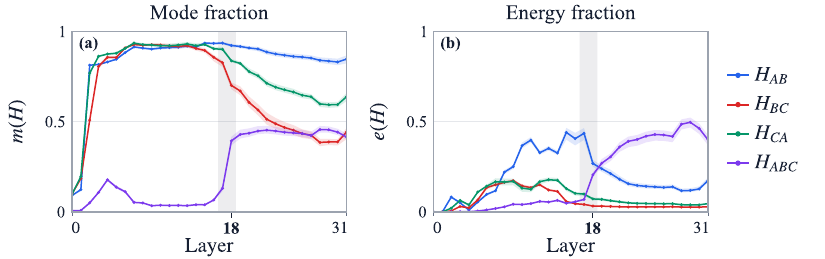}
    \caption{Mode fraction and energy fraction of the interaction ensembles across layers of Llama-3.1-8B. We zero out vectors for which $A$, $B$, and $C$ are not pairwise distinct before computing both fractions (Appendix~\ref{app:coincident-variables}). \textbf{(a)} Mode fraction measures the organization of $H_{AB}^r$, $H_{BC}^r$, and $H_{CA}^r$ by their corresponding pairwise differences, and of $H_{ABC}^r$ by $D=C+B-A$. Under a random null model, the expected mode fraction for second-order interactions is $0.09$ and that for third-order interaction is $0.008$. \textbf{(b)} Energy fraction measures the fraction of mean-subtracted activation energy attributable to each interaction ensemble. Throughout this work, colored bands show the central 80\% interval across 10 replicates, and vertical gray bands highlight the layers discussed in the text.
}
    \label{fig:fourier}
\end{figure}

\subsection{Energy fraction}\label{sec:energy}
An ensemble can have a high mode fraction even if all of its vectors are close to zero. Directly comparing the norms of interaction terms across layers is misleading because $\bm \phi^r$ grows substantially with depth, with most of this growth coming from the mean $\bm \mu^r$. Therefore we need a measure of relative magnitude for each ensemble across layers. For $S \in \{AB,BC,CA,ABC\}$, we define energy fraction $e(H^r_S)$ as\footnote{See Appendix~\ref{app:energy-fraction} for a relation between ANOVA decomposition and our definition of energy fraction.}
\begin{equation}
    e(H_{S}^r) = \frac{\mathbb{E}_{S}\Big(\norm{\bm{h}^r_{S}}^2\Big)}{\mathbb{E}_{ABC}\Big(\norm{\bm{\phi}^r(A,B,C)-\bm{\mu}^r}^2\Big)}
\end{equation}
Figure~\ref{fig:fourier}(b) shows the energy fraction of each interaction ensemble. The energy fraction of $H_{AB}^r$ peaks at layer 15 and starts decreasing around layers 17--18. Around the same depth, the energy fraction of $H_{ABC}^r$ increases dramatically. This coincides with the sharp increase in organization of $H_{ABC}^r$ by $D$, as found in the previous Section~\ref{sec:dft}. Together, these observations hint at a transition from a pairwise representation of the input relation \(B-A\) to a three-way representation aligned with the answer \(C+B-A\), which we test causally in the following section.

\section{Causal use of interaction terms}\label{sec:causal}
The previous section establishes that the interaction ensembles are highly organized by pairwise differences or by $D$, depending on the interaction ensemble and the layer. To test whether these ensembles are used in downstream computation, we intervene directly on the activation of the last token~\citep{zhang2024towards} using its ANOVA decomposition. In the first set of experiments in Section~\ref{sec:ablation}, we intervene on the last token at a particular layer to remove an interaction term. In Section~\ref{sec:replacement}, we instead remove all terms except for the mean and an interaction term. After each intervention, we let the model run without any further modification and score the performance.

\begin{figure}[t]
    \centering
    \includegraphics[width=\linewidth]{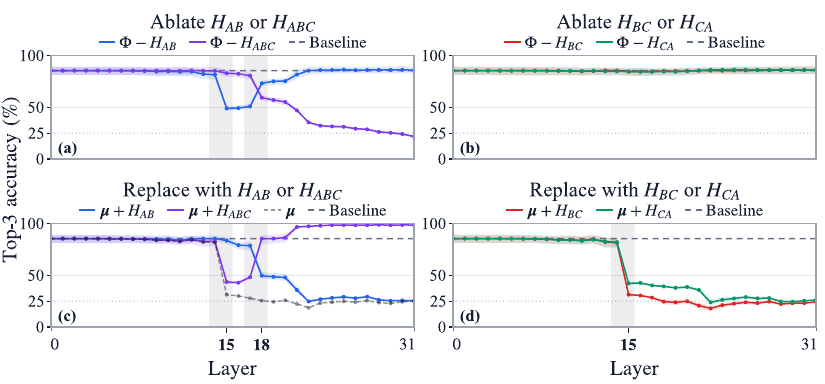}
    \caption{Top row shows the result of ensemble ablation experiments, while the bottom row shows corresponding ensemble replacement experiments. The horizontal dashed line shows the baseline performance. Dotted line at 25\% is the chance of getting the correct answer under a uniform prior. \textbf{(a)} Ablating $H_{AB}^r$ produces the largest decrease in top-3 accuracy around layers 15--17, whereas ablating $H_{ABC}^r$ sharply decreases accuracy after layer 17. \textbf{(b)} Ablating $H_{BC}^r$ or $H_{CA}^r$ has little effect on performance across model depth. \textbf{(c)} Replacement experiments show a complementary transition where retaining $H_{AB}^r$ preserves performance in the middle layers, while retaining $H_{ABC}^r$ preserves performance in later layers. \textbf{(d)} Retaining $H_{BC}^r$ or $H_{CA}^r$ sharply decreases performance from layer 15 onward.}
    \label{fig:ablation}
\end{figure}

\subsection{Ensemble Ablations}\label{sec:ablation}
We define \textit{ensemble ablation} of an interaction ensemble $H_S^r$ by subtracting the corresponding interaction vector $\bm h_S^r$ from the activation of the last token:
\begin{equation}
    \widetilde{\bm{\phi}}^r(A,B,C) = \bm \phi^r(A,B,C) - \bm h^r_S,
\end{equation}
for all $ A,B,C \in \mathcal{X}$ and $S \in \{AB,BC,CA, ABC\}$.

Figure~\ref{fig:ablation} shows the results of ensemble ablation experiments for every interaction term. We see that ablating $H_{BC}^r$ or $H_{CA}^r$ barely changes the top-$3$ accuracy relative to the baseline (Figure~\ref{fig:ablation}(b)). In contrast, ablating $H_{AB}^r$ or $H_{ABC}^r$ causes a drastic change in the model's performance, but at layers 15 and 18 respectively (Figure~\ref{fig:ablation}(a)).

\subsection{Ensemble Replacement}\label{sec:replacement}
In a complementary set of experiments, we perform \textit{ensemble replacement} by retaining a single interaction ensemble together with the mean. Concretely, we replace the activation of the last token with
\begin{equation}
    \widetilde{\bm \phi}^r(A,B,C) = \bm \mu^r+\bm h^r_S,
\end{equation}
for all $ A,B,C \in \mathcal{X}$ and $S \in \{AB,BC,CA,ABC\}$.

Figure~\ref{fig:ablation}(d) shows that retaining $H_{BC}^r$ or $H_{CA}^r$ alone induces a sharp decrease in top-3 accuracy around layers 14--15. However, Figure~\ref{fig:ablation}(c) shows that retaining $H_{AB}^r$ preserves performance relative to the unmodified baseline until around layer 17, after which performance decreases sharply. Because we intervene only at the last-token position and at a particular layer, the model may recompute task-relevant information downstream of the intervention. We therefore also include a $\bm\mu^r$-only baseline to measure this recovery. Before layer 17, retaining $H_{ABC}^r$ gives performance similar to the $\bm\mu^r$-only baseline. After layer 17, however, retaining $H_{ABC}^r$ substantially improves performance over the $\bm\mu^r$-only baseline and even exceeds the unmodified baseline.

Together, ensemble ablation and replacement experiments show a transition in causal relevance from $H_{AB}^r$ to $H_{ABC}^r$ around layers 17--18. Between layers 15--17, ablating $H_{AB}^r$ causes a sharp decrease in performance while retaining $H_{AB}^r$ preserves it. The same aforementioned properties hold for $H_{ABC}^r$ starting around layers 17--18. This transition occurs at the same depth where Section~\ref{sec:fourier} finds that $H_{ABC}^r$ becomes organized according to the answer $D$, the energy fraction of $H_{AB}^r$ starts decreasing, and the energy fraction of $H_{ABC}^r$ rises sharply. Although $H_{BC}^r$ and $H_{CA}^r$ are also strongly organized by their corresponding pairwise differences (Section~\ref{sec:fourier}), ablating them has little effect on the performance, and retaining them does not reproduce the layerwise replacement behavior of $H_{AB}^r$. Additional ablation and replacement experiments for different models and domains are in Appendices~\ref{app:ablation} and~\ref{app:replacement}, respectively. Appendix~\ref{app:causal-handoff} further shows that ablating $H_{AB}^r$ at layer 15 prevents the later emergence of $H_{ABC}^r$.

\section{Steering}\label{sec:steering}

So far, we have shown that the ensembles $H_{AB}^r$ and $H_{ABC}^r$ are causally relevant, but this does not tell us if the model uses their organization according to $\gamma$ or $D$. If the model indeed uses this organization, then changing $\gamma \to \gamma + \delta$ should shift the predicted answer by $\delta$, that is, $D \to D+\delta$. In this section, we test this by steering along $\gamma$ and $D$. There are multiple ways to steer using $H_{AB}^r$: shifting $B\to B+\delta$ such that $\bm h_{AB}^r \to \bm h_{A,B+\delta}^r$, shifting $A \to A-\delta$ such that $\bm h_{AB}^r \to \bm h_{A-\delta,B}^r$, or using an averaged vector that depends only on $\gamma$. Results using a combination of the first two strategies are reported in Appendix~\ref{app:steering}. Here, we describe the last strategy. We define an averaged vector $\bm{\bar h}^r_\gamma$ by\footnote{The chosen steering vector $\bm{\bar h}^r_\gamma$ is closely related to the definition of the mode fraction in Equation~\ref{eq:mode}. We show this correspondence in Appendix~\ref{app:mode-cluster}.}
\begin{equation}
\label{eq:steeringVec}
    \bm{\bar h}^r_\gamma:= \mathop{\mathbb{E}_{AB}}_{B-A=\gamma}\bm h_{AB}^r.
\end{equation}

To steer, we replace the activation $\bm\phi^r$ with $\bm{\widetilde\phi}^r$ according to
\begin{equation}\label{eq:steer}
    % \begin{split}
    \bm{\widetilde \phi}^{r}(A,B,C) = \bm\phi^r(A,B,C) + \alpha \left(\bm{\bar h}^r_{\gamma + \delta} - \bm{\bar h}^r_\gamma\right).
% \end{split}
\end{equation}

\begin{figure}
    \centering
    \includegraphics[width=\linewidth]{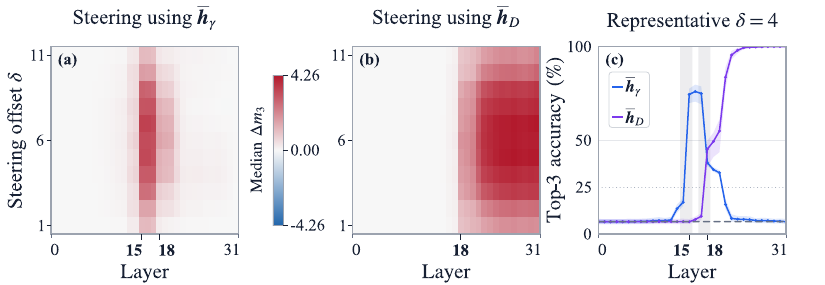}
    \caption{Steering using different vectors as a function of layer at $\alpha=1$. The median is evaluated among all the prompts for each layer and $\delta$.
    \textbf{(a)} We see that steering using  $\bm{\bar h}_\gamma$ is most effective between layers 15--17.
    \textbf{(b)} In contrast, steering using  $\bm{\bar h}_D$ becomes effective from layer 18 onward.
    \textbf{(c)} Top-3 accuracy for $\delta=4$ after steering. The dashed line shows the unmodified model's top-3 accuracy with respect to the target answer $D+\delta$.
    }
    \label{fig:steering}
\end{figure}

Similarly, we define the steering vector $\bm{\bar h}_D$ for the ensemble $H_{ABC}^r$ as
\begin{equation}
    \bm{\bar h}^r_D := \mathop{\mathbb{E}_{ABC}}_{C + B-A = D}\bm h_{ABC}^r
\end{equation}
and steer using Equation~\ref{eq:steer} after replacing $\bm{\bar h}^r_\gamma$ with $\bm{\bar h}^r_D$. The steering method is equivalent to the difference-of-means steering, analogous to that of~\citet{rimsky-etal-2024-steering} and~\citet{subramani-etal-2022-extracting} (see Appendix~\ref{app:steering-vector}).

Let $D^* = D+\delta$ be the target answer. To measure the effect of steering, we compute the top-3 logit margin $m_3$, defined as the difference between the target logit $z_{D^*}$ and the third-largest non-target logit $z_3$:
\begin{equation}
    m_3 = z_{D^*} - z_3.
\end{equation}
Thus $m_3 \geq 0$ iff the target logit is in top-3. Finally, we take the difference between steered and baseline $m_3$ to get
\begin{equation}
    \Delta m_3 = m_3^{\text{steered}} - m_3^{\text{baseline}}.
\end{equation}

In Figures~\ref{fig:steering}(a--b), we report $\mathrm{median\;}\Delta m_3$ for steering using $\bm{\bar h}_\gamma^r$ and $\bm{\bar h}_D^r$ at $\alpha=1$, and Figure~\ref{fig:steering}(c) shows the top-3 accuracy after steering for $\delta=4$. Steering using $\bm{\bar h}^r_{\gamma}$ becomes effective around layers 14--15, but around layers 17--18, its effect decreases and steering using $\bm{\bar h}^r_D$ becomes effective.

\section{Cross-domain transfer}\label{sec:transplant}

Our preceding experiments show that $H_{AB}^r$ is causally relevant in the middle layers while $H_{ABC}^r$ is relevant in the later layers. We now want to understand whether one interaction vector extracted from one context can substitute for the corresponding vector in another context performing the same underlying task. Prior work has shown that relational and task-level representations can be reused across contexts~\citep{wang-etal-2024-locating, todd2024function}. We focus on the interaction ensemble $H_{AB}^r$ here; the performance of transplanting $H_{ABC}^r$ varies across model and domain combinations. Appendix~\ref{app:cross-domain} shows that $H_{ABC}^r$ can be transplanted across domains when their subspaces are aligned. We will use \textit{domain-1} to refer to the original set of prompts that we have discussed so far and we use \textit{domain-2} for the following set of prompts:
\begin{verbatim}
Hi {name}, the {noun} is scheduled between {day} {A} o'clock and
{day} {B} o'clock. This is the same time as between {day}
{C} o'clock and {day} 
\end{verbatim}
where $A,B,C \in \mathcal{X}^{(2)} = \{1,\ldots,12\}$ and (\verb|name|, \verb|noun|, \verb|day|) are replicate variables (Appendix~\ref{app:variables}). We then decompose the domain-2 ensemble according to Section~\ref{sec:decompose}. We transplant an interaction ensemble by replacing each domain-1 interaction vector with the corresponding domain-2 interaction vector indexed by the same values of the underlying cyclic variables. We consider
\begin{equation}\label{eq:transAblate}
    \widetilde{\bm{\phi}}^{r_1,(1)}(A,B,C) = \bm{\phi}^{r_1,(1)}(A,B,C) -\bm h^{r_1,(1)}_{AB} + \bm h^{r_2,(2)}_{AB},
\end{equation}
where superscripts $(1), (2)$ refer to the domain and $r_1,r_2$ refer to the corresponding replicate. Analogous to the ensemble replacement experiments in Section~\ref{sec:replacement}, we also consider
\begin{equation}\label{eq:transReplace}
    \widetilde{\bm{\phi}}^{r_1,(1)}(A,B,C) = \bm{\mu}^{r_1,(1)} + \bm h_{AB}^{r_2,(2)}.
\end{equation}
In both cases, the model runs unmodified after the intervention and we score relative to domain-1. Figure~\ref{fig:transplant}(a) shows that replacing an ablated domain-1 interaction ensemble with the corresponding domain-2 ensemble restores the performance lost under ablation. Figure~\ref{fig:transplant}(b) shows that the domain-2 interaction ensemble also reproduces the layerwise trend observed in the replacement experiment in Section~\ref{sec:replacement}. 

\begin{figure}[t]
    \centering
    \includegraphics[width=\linewidth]{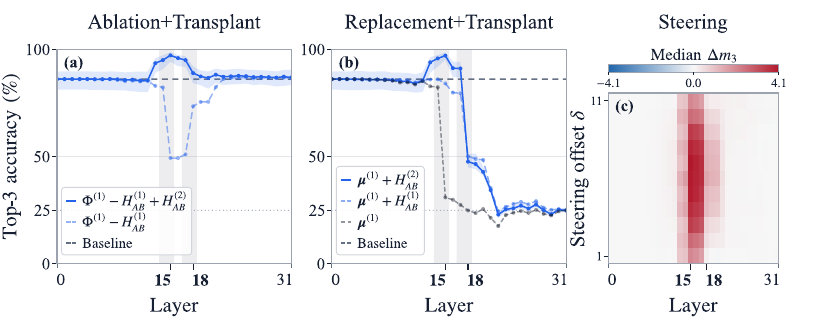}
    \caption{Shaded bands show the central 80\% interval across 10 randomly sampled replicate pairs $(r_1,r_2)$. \textbf{(a)} Transplanting the interaction ensemble $H_{AB}^{(2)}$ from domain-2 to domain-1 preserves performance. \textbf{(b)} Transplanting and retaining only the domain-1 mean $\bm{\mu}^{(1)}$ and domain-2 interaction ensemble $H^{(2)}_{AB}$ also preserves performance compared to the dashed blue within-domain replacement experiments. \textbf{(c)} Cross-domain steering using $\bm{\bar h}_{\gamma+\delta}^{r_2,(2)}-\bm{\bar h}_{\gamma}^{r_1,(1)}$ at $\alpha=1$.}
    \label{fig:transplant}
\end{figure}

We next ask whether the organization according to $\gamma$ also transfers across domains. Using the steering vector defined in Equation~\ref{eq:steeringVec}, we steer using the difference between the domain-1 vector for $\gamma$ and the domain-2 vector for $\gamma+\delta$:
\begin{equation}
\widetilde{\bm\phi}^{r_1,(1)}(A,B,C) =\bm\phi^{r_1,(1)}(A,B,C)+\alpha\left(\bm{\bar h}_{\gamma+\delta}^{r_2,(2)}-\bm{\bar h}_{\gamma}^{r_1,(1)}\right).
\end{equation}
If the organization by $\gamma$ transfers between two domains, then the answer should shift accordingly. Figure~\ref{fig:transplant}(c) shows that steering vectors constructed across the two domains produce a steering pattern similar to that in Section~\ref{sec:steering}.

\section{Discussion}\label{sec:discussion}
We started by asking whether it is possible to separate variation due to individual concepts from variation arising through their joint dependence in an activation ensemble. We showed that using the functional ANOVA decomposition is a simple yet powerful technique to accomplish this task. The resulting interaction ensembles exhibit clear geometric structure, but not all geometrically organized ensembles are causally necessary for the task. For the ensembles that are causally relevant, their organization can be exploited to predictably steer the model.
% \Chris{some people will only read the discussion and so it might be nice to recapitulate a few concrete results here.}
% \Chris{I might delay this to limitations?}

We also found two results that we did not anticipate. First, retaining only the mean and the third-order interaction can outperform the unmodified model. Our interventions show that $H_{ABC}^r$ is used by the LLM to output the correct answer, while other interaction ensembles either are unused or perform a different function. Therefore, it may happen that those other interaction ensembles interfere destructively in the computation in the last few layers. Second, transplanting $H_{AB}$ between domains that perform the same underlying task recovers much of the performance lost under ablation. This suggests that the intermediate representation of the inferred relation is sufficiently compatible across domains to support downstream computation. Neither observation would have been accessible in the full activation ensemble without first isolating the joint dependence from the remaining components.
% \Chris{this seems to me to be be a strong point to lead with in the introduction...the reason for the work, and an important part of the contributions}.

Our results also suggest a possible connection to the neuron-level analysis of \citet{feucht2026arithmeticwildllamauses}. They identify a small set of MLP neurons at layer 18 that are associated with Fourier components of the arithmetic representation in Llama-3.1-8B. In our analysis, the same depth is where the three-way interaction $H_{ABC}^r$ sharply increases in energy fraction, becomes organized by $D$, and becomes causally relevant. While we do not analyze individual neurons here, the interaction decomposition may provide a way to identify the layers in which such structure emerges and may also provide a model-agnostic starting point for neuron-level analyses.

\subsection*{Limitations}
\textbf{Scope}. We have studied a controlled setting with three variables $A,B,C$ that are drawn from the \emph{same cyclic} concept whose composition has a \emph{definitive} arithmetic answer. A natural extension is to allow $A,B,C$ to vary over different, potentially non-cyclic concepts whose composition may not have a well-defined answer.

\textbf{Exponential scaling}. As we increase the number of variables $p$, the number of prompts required to furnish the ANOVA decomposition scales exponentially~$\sim |\mathcal{X}|^p$.

\subsection*{Acknowledgments}
This research is funded in part by the Gordon and Betty Moore Foundation through Grant GBMF13901 to Cornell to support the work of G.A.
\subsection*{AI use statement}
% We have not used generative AI for developing theoretical models or conceptual frameworks, formulating mathematical claims, providing critical ingredients for proving mathematical claims, assisting in the writing of proofs, proposing or refining hypotheses, designing or providing feedback on research methodology or experiments, implementing methods, supporting qualitative and thematic data analysis, interpreting results. The following does not apply to this work: generating synthetic data sets, assisting with translation, cleaning and reformatting datasets.

We have used generative AI (GPT-5.6 and GPT-6) to code, proofread the paper to improve readability, correct typos and grammatical errors.
We take responsibility for the final content of this work, including text, claims or artifacts produced with the aid of generative AI.

\bibliography{iclr2027_conference}
\bibliographystyle{iclr2027_conference}

\appendix
\section{Models, Prompts, and Capabilities}
\label{app:capabilities}
In this work, we evaluate the robustness of our results across seven base models ranging from 3B to 24B parameters: Llama-3.2-3B, Llama-3.1-8B, Qwen-2.5-7B, Qwen-3-8B, Gemma-2-9B, Gemma-3-12B, Mistral-small-24B-Base-2501. Table~\ref{tab:frozen-prompts} lists all the prompts for each model, while Table~\ref{tab:model-domain-accuracy} shows their corresponding performances using top-1 and top-3 accuracy as metrics. Here, we report performance using all prompts, including cases with coincident variables, that is, when $A=B$, $B=C$, or $C=A$.

We use two conditions to fix the prompt template: \textbf{(a)} The model should be capable of answering the problem via the next token. \textbf{(b)} All variations over $A,B,C$ must tokenize to the same width. Therefore we exclude evaluation of the hours domain for the Qwen, Gemma, and Mistral families since all of them use a single digit tokenizer.
\subsection{List of variables}
\label{app:variables}
Tables~\ref{tab:nuisance-variables} and~\ref{tab:cycle-variables} show the values of variables used for replicate variables and $A,B,C$ respectively.

\subsection{Capabilities}
Figure~\ref{fig:app-accuracy-prompts} shows the top-1 and top-3 accuracy of each model indexed by $\gamma$ over prompt templates in Table~\ref{tab:frozen-prompts}. Table~\ref{tab:model-domain-accuracy} lists the aggregate top-1 and top-3 accuracy over replicates.

\begingroup
\normalsize

\setlength{\LTleft}{0pt}
\setlength{\LTright}{0pt}
\setlength{\LTcapwidth}{\linewidth}

\fvset{
fontsize=\normalsize,
breaklines=true,
breakanywhere=true,
breaksymbolleft={},
breaksymbolright={}
}

\begin{longtable}{
@{}
>{\raggedright\arraybackslash}p{1.55in}
@{\hspace{0.08in}}
>{\raggedright\arraybackslash}p{0.65in}
@{\hspace{0.08in}}
>{\raggedright\arraybackslash}p{3.14in}
@{}
}
\caption{Prompt templates used for each model and cyclic domain. \(A,B,C\) and the replicate variables vary as described in Tables~\ref{tab:nuisance-variables} and~\ref{tab:cycle-variables}. \texttt{N/A} indicates domains excluded because the cyclic values do not tokenize to a common width.}%
\label{tab:frozen-prompts}\\

\toprule
\textbf{Model} & \textbf{Domain} & \textbf{Prompt template} \\
\midrule
\endfirsthead

\noalign{\vskip 11pt}
\multicolumn{3}{c}{\tablename\ \thetable{} continued} \\[11pt]
\toprule
\textbf{Model} & \textbf{Domain} & \textbf{Prompt template} \\
\midrule
\endhead

\midrule
\noalign{\vskip 11pt}
\multicolumn{3}{r}{Continued on next page} \\
\endfoot

\bottomrule
\endlastfoot

\multirow[t]{4}{1.55in}{\raggedright Llama-3.2-3B}
& months
& \Verb|Hi {name}, the {noun} is scheduled from {A} {dates} to {B} {dates}. This is the same as time from {C} {dates} to| \\

& hours
& \Verb|Hi {name}, the {noun} is scheduled from {day} {A} o'clock to {day} {B} o'clock. This is the same time as from {day} {C} o'clock to {day} | \\

& weekdays
& \Verb|Hi {name}, the {noun} is scheduled from {A} to {B}. This is the same as time from {C} to| \\

& music
& \Verb|Hi {name}, on the musical scale, the interval from the note {A} to the note {B} is the same as the interval from the note {C} to the note| \\

\addlinespace[0.8em]

\multirow[t]{4}{1.55in}{\raggedright Llama-3.1-8B}
& months
& \Verb|Hi {name}, the {noun} is scheduled between {A} {dates} and {B} {dates}. This is the same time as between {C} {dates} and| \\

& hours
& \Verb|Hi {name}, the {noun} is scheduled between {day} {A} o'clock and {day} {B} o'clock. This is the same time as between {day} {C} o'clock and {day} | \\

& weekdays
& \Verb|Hi {name}, the {noun} is scheduled between {A} and {B}. This is the same as time between {C} and| \\

& music
& \Verb|Hi {name}, on the musical scale, the interval from the note {A} to the note {B} is the same as the interval from the note {C} to the note| \\

\addlinespace[0.8em]

\multirow[t]{4}{1.55in}{\raggedright Qwen-2.5-7B}
& months
& \Verb|Hello {name}, the {noun} is scheduled from {A} {datesDouble} to {B} {datesDouble}. This is the same as duration from {C} {datesDouble} to| \\

& hours
& \Verb|N/A| \\

& weekdays
& \Verb|Hi {name}, the {noun} is scheduled from {A} to {B}. This is the same time as from {C} to| \\

& music
& \Verb|Hi {name}, on the musical scale, the interval from the note {A} to the note {B} is the same as the interval from the note {C} to the note| \\

\addlinespace[0.8em]

\multirow[t]{4}{1.55in}{\raggedright Qwen-3-8B}
& months
& \Verb|Hello {name}, the {noun} is scheduled from {A} {datesDouble} to {B} {datesDouble}. This is the same time as from {C} {datesDouble} to| \\

& hours
& \Verb|N/A| \\

& weekdays
& \Verb|Hi {name}, the {noun} is scheduled from {A} to {B}. This is the same time as from {C} to| \\

& music
& \Verb|Hi {name}, on the musical scale, the interval from the note {A} to the note {B} is the same as the interval from the note {C} to the note| \\

\addlinespace[0.8em]

\multirow[t]{4}{1.55in}{\raggedright Gemma-2-9B}
& months
& \Verb|Hello {name}, the {noun} is scheduled from {A} to {B}. This is the same as duration from {C} to| \\

& hours
& \Verb|N/A| \\

& weekdays
& \Verb|Hi {name}, the {noun} is scheduled from {A} to {B}. This is the same time as from {C} to| \\

& music
& \Verb|Hi {name}, on the musical scale, the interval from the note {A} to the note {B} is the same as the interval from the note {C} to the note| \\

\addlinespace[0.8em]

\multirow[t]{4}{1.55in}{\raggedright Gemma-3-12B}
& months
& \Verb|Hello {name}, the {noun} is scheduled from {A} to {B}. This is the same time as from {C} to| \\

& hours
& \Verb|N/A| \\

& weekdays
& \Verb|Hi {name}, the {noun} is scheduled from {A} to {B}. This is the same time as from {C} to| \\

& music
& \Verb|Hi {name}, on the musical scale, the interval from the note {A} to the note {B} is the same as the interval from the note {C} to the note| \\

\addlinespace[0.8em]

\multirow[t]{4}{1.55in}{\raggedright Mistral-small-24B-Base-2501}
& months
& \Verb|Hi {name}, the {noun} is scheduled from {A} to {B}. This is the same as time from {C} to| \\

& hours
& \Verb|N/A| \\

& weekdays
& \Verb|Hi {name}, the {noun} is scheduled between {A} and {B}. This is the same as time between {C} and| \\

& music
& \Verb|Hi {name}, on the musical scale, the interval from the note {A} to the note {B} is the same as the interval from the note {C} to the note| \\

\end{longtable}
\endgroup

\begin{figure}
    \centering
    \includegraphics[width=\linewidth]{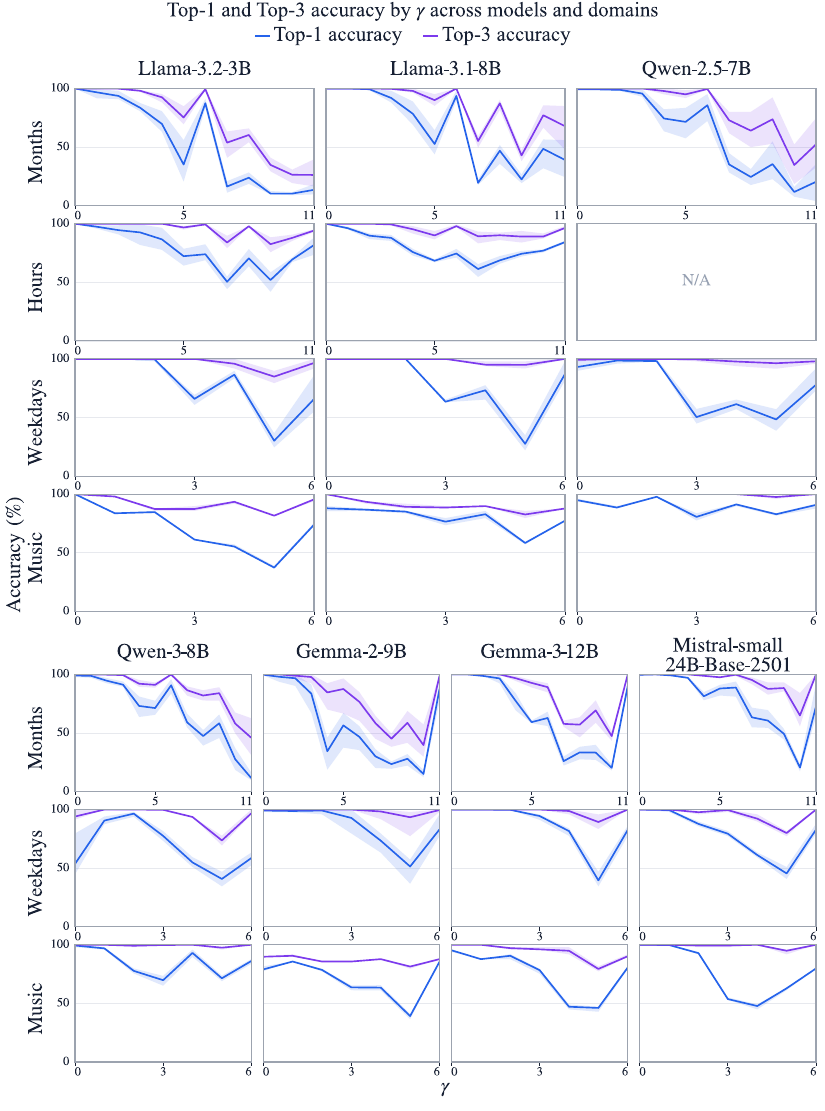}
    \caption{Top-1 and top-3 accuracy as a function of \(\gamma=B-A\) across models and cyclic concepts. Columns correspond to models and rows to concepts. Curves show accuracy averaged across the 10 sampled replicates, with shaded bands showing the central 80\% interval. Hours are omitted for Qwen, Gemma, and Mistral because the relevant numerical values do not tokenize to a common width. Top-3 accuracy is generally more stable across \(\gamma\) than top-1 accuracy.}
    \label{fig:app-accuracy-prompts}
\end{figure}

\section{Theoretical background}\label{app:anova}
This appendix collects properties of the functional ANOVA decomposition used throughout the paper and establishes the identities underlying the mode fraction, the energy fraction and steering analyses. Let us revisit the functional ANOVA decomposition defined in the main text,

\begin{equation*}
    \bm{\phi}^r(A,B,C) = \bm{\mu}^r + \bm{h}_{A}^r + \bm{h}_{B}^r + \bm{h}_{C}^r + \bm{h}_{AB}^r + \bm{h}_{BC}^r + \bm{h}_{CA}^r + \bm{h}_{ABC}^r,
\end{equation*}
where
\begin{equation*}
    \begin{aligned}
    \bm{\mu}^r &:= \mathbb{E}_{ABC}(\bm{\phi}^r),\\
    \bm{h}_{A}^r &:= \mathbb{E}_{BC}(\bm{\phi}^r) - \bm{\mu}^r,\\
    \bm{h}_{AB}^r &:= \mathbb{E}_{C}(\bm{\phi}^r) - \bm{\mu}^r - \bm{h}_{A}^r - \bm{h}_{B}^r,\\
    \bm{h}_{ABC}^r &:= \bm{\phi}^r - \bm{\mu}^r - \bm{h}_{A}^r - \bm{h}_{B}^r - \bm{h}_{C}^r - \bm{h}_{AB}^r - \bm{h}_{BC}^r - \bm{h}_{CA}^r.
\end{aligned}
\end{equation*}
To standardize notation, we use $\bm h^r_{\emptyset} = \bm \mu^r$. Using these definitions, it immediately follows that each non-constant decomposed term has zero mean with respect to each variable on which it depends.
\begin{lemma}[Zero mean of ANOVA terms]
\label{lem:zero-mean}
For every nonempty $S\subseteq\{A,B,C\}$ and every $X\in S$,

\begin{equation}
    \mathbb{E}_X\!\left[\bm h_S^r\right] = \bm 0.
\end{equation}
\end{lemma}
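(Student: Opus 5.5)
The plan is a direct computation from the definitions, organized by the order $|S|$ of the term. Averages over distinct variables commute, since the ensemble is the full product grid $\mathcal X^3$ with uniform weights. Averaging over a variable on which a term does not depend leaves that term unchanged. Both facts follow from the finite-sum definition of $\mathbb{E}$. By the cyclic symmetry of the definitions, it suffices to treat $S=\{A\}$, $S=\{A,B\}$ with $X\in\{A,B\}$, and $S=\{A,B,C\}$ with $X\in\{A,B,C\}$.

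\textbf{First order.} We have $\mathbb{E}_A[\bm h_A^r]=\mathbb{E}_A\mathbb{E}_{BC}(\bm\phi^r)-\bm\mu^r=\mathbb{E}_{ABC}(\bm\phi^r)-\bm\mu^r=\bm 0$.

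\textbf{Second order.} Apply $\mathbb{E}_A$ to $\bm h_{AB}^r=\mathbb{E}_C(\bm\phi^r)-\bm\mu^r-\bm h_A^r-\bm h_B^r$. This gives
\[
\mathbb{E}_{AC}(\bm\phi^r)-\bm\mu^r-\bm 0-\bm h_B^r .
\]
By definition of $\bm h_B^r$, this equals $\bm 0$. The case $X=B$ is symmetric.

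\textbf{Third order.} Apply $\mathbb{E}_C$ to the defining expression of $\bm h_{ABC}^r$. The terms evaluate as follows.
\begin{itemize}
\item $\mathbb{E}_C\bm\phi^r$ stays as it is.
\item $\bm\mu^r$, $\bm h_A^r$, $\bm h_B^r$, and $\bm h_{AB}^r$ do not depend on $C$, so they are unchanged.
\item $\mathbb{E}_C\bm h_C^r=\bm 0$ by the first-order case.
\item $\mathbb{E}_C\bm h_{BC}^r=\mathbb{E}_C\bm h_{CA}^r=\bm 0$ by the second-order case.
\end{itemize}
The result is $\mathbb{E}_C\bm\phi^r-\bm\mu^r-\bm h_A^r-\bm h_B^r-\bm h_{AB}^r$, which vanishes by the definition of $\bm h_{AB}^r$. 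The cases $X=A,B$ follow in the same way using $\bm h_{BC}^r$ and $\bm h_{CA}^r$.

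The only real obstacle is bookkeeping. The argument must be run in increasing order of $|S|$, so that lower-order zero-mean facts are available when needed. It also relies on the averaging operators commuting, which holds because the grid is a full product with uniform measure.
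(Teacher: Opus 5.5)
Your proposal is correct and follows essentially the same route as the paper: a direct computation from the definitions in increasing order of $|S|$, using the lower-order zero-mean facts at each step. The only differences are cosmetic. You check the three-way case with $X=C$, which collapses to the definition of $\bm h_{AB}^r$, whereas the paper uses $X=A$, which collapses to $\bm h_{BC}^r$. You also state explicitly that averages over the uniform product grid commute, which the paper uses without comment.
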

\begin{proof}
    Consider $\bm h_A^r$:
    \begin{align*}
        \mathbb{E}_A(\bm h_A^r) &= \mathbb{E}_A\left( \mathbb{E}_{BC}(\bm{\phi}^r) - \bm{\mu}^r \right)\\
        &= \mathbb{E}_{ABC}(\bm{\phi}^r) - \bm{\mu}^r\\
        &= \bm \mu^r -\bm \mu^r\\
        &=0.
    \end{align*}
    Similarly, for $\bm h_{AB}^r$
    \begin{align*}
        \mathbb{E}_A(\bm h_{AB}^r) &= \mathbb{E}_A\left( \mathbb{E}_{C}(\bm{\phi}^r) - \bm{\mu}^r - \bm{h}_{A}^r - \bm{h}_{B}^r \right)\\
        &= \mathbb{E}_{AC}(\bm{\phi}^r) - \bm{\mu}^r - \mathbb{E}_A(\bm{h}_{A}^r) - \bm{h}_{B}^r\\
        &= \left(\bm \mu^r + \bm h_B^r\right) -\bm \mu^r -0 -\bm h_B^r\\
        &=0.
    \end{align*}
    The same argument holds for $\mathbb{E}_B(\bm h_{AB}) =0$. Finally, consider $\bm h_{ABC}^r$
    \begin{align*}
        \mathbb{E}_A(\bm h_{ABC}^r) &= \mathbb{E}_A\left( \bm{\phi}^r - \bm{\mu}^r - \bm{h}_{A}^r - \bm{h}_{B}^r - \bm{h}_{C}^r - \bm{h}_{AB}^r - \bm{h}_{BC}^r - \bm{h}_{CA}^r \right)\\
        &= \mathbb{E}_{A}(\bm{\phi}^r) - \bm{\mu}^r - \mathbb{E}_A(\bm{h}_{A}^r) - \bm{h}_{B}^r  -\bm{h}_{C}^r - \mathbb{E}_A(\bm{h}_{AB}^r) - \bm{h}_{BC}^r - \mathbb{E}_A(\bm{h}_{CA}^r)\\
        &= \left(\bm \mu^r + \bm h_B^r +\bm{h}_{C}^r + \bm{h}_{BC}^r\right) -\bm \mu^r -0 -\bm h_B^r-\bm{h}_{C}^r - 0 - \bm{h}_{BC}^r - 0\\
        &=0.
    \end{align*}
    Similarly, $\mathbb{E}_B(\bm h_{ABC}^r) = \mathbb{E}_C(\bm h_{ABC}^r)=0$. In general, it can be proved by induction.
\end{proof}

\begin{definition}[Ensemble-averaged form]\label{def:inner-product}
For $S, T \subseteq \{A,B,C\}$ and two decomposed ensembles $H_S^r$ and $H_T^r$ with vectors $\bm h_S^r \in H_S^r$ and $\bm h_T^r \in H_T^r$, define the ensemble-averaged form as
\begin{equation}
\label{eq:ensemble-inner-product}
    \left\langle H_S^r,H_T^r\right\rangle_{\mathrm{ens}}
    :=
    \mathbb E_{ABC}
    \left[
        \left\langle
            \bm h_S^r,\bm h_T^r
        \right\rangle
    \right]
\end{equation}
where the inner product on the RHS is the standard Euclidean inner product defined over $\mathbb{R}^{\text{d}_{\text{model}}}$.
\end{definition}
\begin{table}[t]
\centering
\caption{Replicate variables used in the experiments.}
\label{tab:nuisance-variables}

\vspace{\baselineskip}
\begin{tabularx}{\linewidth}{
@{}
>{\raggedright\arraybackslash}p{1.15in}
>{\raggedright\arraybackslash}X
@{}
}
\toprule
\textbf{Variable}
& \textbf{Values} \\
\midrule

\texttt{name}
&
Each name should tokenize to one token. There are 194 approved names for each Llama, Qwen, and Mistral model,
and 200 for each Gemma model from the list of 200 most common names of the last century~\citep{ssa_babynames_century_2026}. \\

\addlinespace[0.45em]

\texttt{noun}
&
\texttt{conference}, \texttt{meeting},
\texttt{forum}, \texttt{summit} \\

\addlinespace[0.45em]

\texttt{dates}
&
\texttt{1},$\ldots$,\texttt{28} \\

\addlinespace[0.45em]

\texttt{datesDouble}
&
\texttt{10},$\ldots$,\texttt{28}\\

\addlinespace[0.45em]

\texttt{day}
&
\texttt{Monday}, \texttt{Tuesday}, \texttt{Wednesday},
\texttt{Thursday}, \texttt{Friday}, \texttt{Saturday},
\texttt{Sunday} \\

\bottomrule
\end{tabularx}
\end{table}
\begin{table}[t]
\centering
\caption{Values assigned to the cyclic variables $A$, $B$,
and $C$ in each domain. Months and hours have cycle length \(N=12\), while weekdays and musical notes have \(N=7\).}
\label{tab:cycle-variables}

\vspace{\baselineskip}

\begin{tabularx}{\linewidth}{
@{}
>{\raggedright\arraybackslash}p{1.05in}
>{\raggedright\arraybackslash}X
@{}
}
\toprule
\textbf{Domain}
& \textbf{Values of $A$, $B$, and $C$} \\
\midrule

Months
&
\texttt{January}, \texttt{February}, \texttt{March},
\texttt{April}, \texttt{May}, \texttt{June}, \texttt{July},
\texttt{August}, \texttt{September}, \texttt{October},
\texttt{November}, \texttt{December} \\

\addlinespace[0.45em]

Hours
&
\texttt{1}, \texttt{2}, \texttt{3}, \texttt{4}, \texttt{5},
\texttt{6}, \texttt{7}, \texttt{8}, \texttt{9}, \texttt{10},
\texttt{11}, \texttt{12} \\

\addlinespace[0.45em]

Weekdays
&
\texttt{Monday}, \texttt{Tuesday}, \texttt{Wednesday},
\texttt{Thursday}, \texttt{Friday}, \texttt{Saturday},
\texttt{Sunday} \\

\addlinespace[0.45em]

Music
&
\texttt{C}, \texttt{D}, \texttt{E}, \texttt{F},
\texttt{G}, \texttt{A}, \texttt{B} \\
\bottomrule
\end{tabularx}
\end{table}
\begin{table}[t]
\caption{Top-1 and top-3 accuracy by model and domain. Values are mean
\(\pm\) standard deviation across 10 replicates, computed over
all prompts, including coincident values of \(A\), \(B\), and \(C\).}
\label{tab:model-domain-accuracy}
\begin{center}

\begin{tabular}{
@{}
>{\raggedright\arraybackslash}p{1.20in}
@{\hspace{0.07in}}
>{\raggedright\arraybackslash}p{0.85in}
@{\hspace{0.07in}}
>{\centering\arraybackslash}p{0.70in}
@{\hspace{0.07in}}
>{\centering\arraybackslash}p{0.85in}
@{\hspace{0.07in}}
>{\centering\arraybackslash}p{0.80in}
@{\hspace{0.07in}}
>{\centering\arraybackslash}p{0.75in}
@{}
}
\toprule
\textbf{Model}
& \textbf{Performance metric}
& \textbf{Months}
& \textbf{Hours}
& \textbf{Weekdays}
& \textbf{Music} \\
\midrule

\multirow[t]{2}{1.20in}{\raggedright Llama-3.2-3B}
& Top-1 accuracy
& \(53.4 \pm 2.2\)\%
& \(78.5 \pm 3.2\)\%
& \(78.3 \pm 2.5\)\%
& \(70.9 \pm 0.3\)\% \\
& Top-3 accuracy
& \(72.2 \pm 2.8\)\%
& \(95.2 \pm 1.1\)\%
& \(96.8 \pm 0.8\)\%
& \(91.9 \pm 0.5\)\% \\

\addlinespace[0.6em]

\multirow[t]{2}{1.20in}{\raggedright Llama-3.1-8B}
& Top-1 accuracy
& \(66.0 \pm 3.9\)\%
& \(79.9 \pm 1.8\)\%
& \(78.8 \pm 1.1\)\%
& \(79.3 \pm 0.7\)\% \\
& Top-3 accuracy
& \(84.8 \pm 2.6\)\%
& \(94.7 \pm 1.5\)\%
& \(98.6 \pm 0.3\)\%
& \(90.2 \pm 0.4\)\% \\

\addlinespace[0.6em]

\multirow[t]{2}{1.20in}{\raggedright Qwen-2.5-7B}
& Top-1 accuracy
& \(62.7 \pm 7.0\)\%
& N/A
& \(75.4 \pm 2.4\)\%
& \(89.5 \pm 0.6\)\% \\
& Top-3 accuracy
& \(82.5 \pm 6.8\)\%
& N/A
& \(98.7 \pm 0.8\)\%
& \(99.7 \pm 0.2\)\% \\

\addlinespace[0.6em]

\multirow[t]{2}{1.20in}{\raggedright Qwen-3-8B}
& Top-1 accuracy
& \(68.5 \pm 4.5\)\%
& N/A
& \(67.5 \pm 3.4\)\%
& \(84.9 \pm 1.7\)\% \\
& Top-3 accuracy
& \(86.5 \pm 2.6\)\%
& N/A
& \(94.0 \pm 1.0\)\%
& \(99.5 \pm 0.3\)\% \\

\addlinespace[0.6em]

\multirow[t]{2}{1.20in}{\raggedright Gemma-2-9B}
& Top-1 accuracy
& \(58.3 \pm 5.9\)\%
& N/A
& \(85.4 \pm 2.6\)\%
& \(70.7 \pm 0.6\)\% \\
& Top-3 accuracy
& \(78.8 \pm 6.2\)\%
& N/A
& \(98.8 \pm 1.7\)\%
& \(87.0 \pm 0.3\)\% \\

\addlinespace[0.6em]

\multirow[t]{2}{1.20in}{\raggedright Gemma-3-12B}
& Top-1 accuracy
& \(66.5 \pm 1.8\)\%
& N/A
& \(85.5 \pm 1.4\)\%
& \(75.1 \pm 1.3\)\% \\
& Top-3 accuracy
& \(84.1 \pm 1.7\)\%
& N/A
& \(98.3 \pm 1.1\)\%
& \(94.0 \pm 0.8\)\% \\

\addlinespace[0.6em]

\multirow[t]{2}{1.20in}{\raggedright
Mistral-small-24B-Base-2501}
& Top-1 accuracy
& \(76.7 \pm 2.7\)\%
& N/A
& \(79.5 \pm 0.7\)\%
& \(76.7 \pm 0.7\)\% \\
& Top-3 accuracy
& \(94.3 \pm 1.6\)\%
& N/A
& \(95.7 \pm 0.7\)\%
& \(99.1 \pm 0.3\)\% \\

\bottomrule
\end{tabular}

\end{center}
\end{table}
\begin{proposition}[Inner product]
    Under conditions of Definition~\ref{def:inner-product}, Equation~\ref{eq:ensemble-inner-product} satisfies the properties of an inner product:
    \begin{enumerate}
        \item \textbf{Symmetry}. $\left\langle H_S^r,H_T^r\right\rangle_{\mathrm{ens}}=\left\langle H_T^r,H_S^r\right\rangle_{\mathrm{ens}}$.
        \item \textbf{Linearity}. $\left\langle\alpha H_S^r+\beta H_T^r,H_U^r\right\rangle_{\mathrm{ens}}=\alpha\left\langle H_S^r,H_U^r\right\rangle_{\mathrm{ens}}+\beta\left\langle H_T^r,H_U^r\right\rangle_{\mathrm{ens}}$ for any $\alpha,\beta \in \mathbb{R}$.
        \item \textbf{Positive definiteness}. $\left\langle H_S^r,H_S^r\right\rangle_{\mathrm{ens}}\geq 0$ and the equality holds iff $H_S^r = 0$.
    \end{enumerate}
\end{proposition}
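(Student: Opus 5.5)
The plan is to reduce each property to the corresponding property of the Euclidean inner product on $\mathbb{R}^{d_{\text{model}}}$, applied pointwise for each $(A,B,C)\in\mathcal{X}^3$, and then use that $\mathbb{E}_{ABC}$ is a finite uniform average over $|\mathcal{X}|^3$ points. First I will fix the underlying vector space. Each ensemble $H_S^r$ is regarded as a function $\mathcal{X}^3\to\mathbb{R}^{d_{\text{model}}}$ that happens not to depend on the variables outside $S$, for example $(A,B,C)\mapsto \bm h_{AB}^r$. The combination $\alpha H_S^r+\beta H_T^r$ is then the pointwise linear combination $(A,B,C)\mapsto \alpha\bm h_S^r+\beta\bm h_T^r$. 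With this reading, Definition~\ref{def:inner-product} makes sense for arbitrary elements of the space $(\mathbb{R}^{d_{\text{model}}})^{\mathcal{X}^3}$, and not only for the decomposed terms.

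\textbf{Symmetry.} This holds because $\langle \bm h_S^r,\bm h_T^r\rangle=\langle \bm h_T^r,\bm h_S^r\rangle$ pointwise, and averaging preserves the equality.

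\textbf{Linearity.} Expand pointwise to get $\langle \alpha\bm h_S^r+\beta\bm h_T^r,\bm h_U^r\rangle=\alpha\langle\bm h_S^r,\bm h_U^r\rangle+\beta\langle\bm h_T^r,\bm h_U^r\rangle$. Then use the fact that $\mathbb{E}_{ABC}$ is a linear functional, being a finite sum divided by $|\mathcal{X}|^3$.

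\textbf{Positive definiteness.} Here $\langle H_S^r,H_S^r\rangle_{\mathrm{ens}}=|\mathcal{X}|^{-3}\sum_{A,B,C}\norm{\bm h_S^r}^2$, which is a sum of nonnegative terms and therefore $\ge 0$. If it equals zero, every summand must vanish, because a finite sum of nonnegative reals is zero only if each term is zero. Hence $\bm h_S^r(A,B,C)=\bm 0$ for all $(A,B,C)$, that is, $H_S^r=0$. The converse is immediate.

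The only step needing care is this last one. Its success depends on the uniform weight on each point of $\mathcal{X}^3$ being strictly positive, so that the form is definite and not merely semi-definite. I will state explicitly that $\mathbb{E}_{ABC}$ denotes the uniform average over the full grid used to build $\Phi^r$. The argument would fail if some prompts were given zero weight: for instance, if coincident-variable prompts were zeroed out as in Appendix~\ref{app:coincident-variables}, the form would only be positive semi-definite on functions supported there. I will also remark that a term such as $\bm h_{AB}^r$ is identified with its constant extension in $C$. Under that extension, $H_S^r=0$ as an ensemble over $\mathcal{X}^{|S|}$ is equivalent to vanishing as a function on $\mathcal{X}^3$, so the two meanings of ``$H_S^r=0$'' agree.
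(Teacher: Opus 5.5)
Your proof is correct and follows the same route as the paper. Symmetry and linearity reduce to the pointwise Euclidean properties together with linearity of $\mathbb{E}_{ABC}$, and definiteness follows from nonnegativity of $\norm{\bm h_S^r}^2$. Your explicit note that the uniform average puts strictly positive weight on every point of $\mathcal{X}^3$ is a useful refinement: it is what makes the paper's step ``a nonnegative random variable has expectation zero iff it is zero'' give vanishing everywhere rather than merely almost surely.
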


\begin{proof}
\textbf{Symmetry.}
By symmetry of the Euclidean inner product,
\[
    \left\langle H_S^r,H_T^r\right\rangle_{\mathrm{ens}}
    =
    \mathbb E_{ABC}
    \left[\left\langle \bm h_S^r,\bm h_T^r\right\rangle\right]
    =
    \mathbb E_{ABC}
    \left[\left\langle \bm h_T^r,\bm h_S^r\right\rangle\right]
    =
    \left\langle H_T^r,H_S^r\right\rangle_{\mathrm{ens}}.
\]

\textbf{Linearity.}
For any $\alpha,\beta\in\mathbb R$ and ensembles
$H_S^r,H_T^r,H_U^r$, linearity of the Euclidean inner product
and of expectation gives
\begin{align*}
    \left\langle
        \alpha H_S^r+\beta H_T^r,H_U^r
    \right\rangle_{\mathrm{ens}}
    &=
    \mathbb E_{ABC}
    \left[
        \left\langle
            \alpha\bm h_S^r+\beta\bm h_T^r,\bm h_U^r
        \right\rangle
    \right] \\
    &=
    \alpha\left\langle H_S^r,H_U^r\right\rangle_{\mathrm{ens}}
    +
    \beta\left\langle H_T^r,H_U^r\right\rangle_{\mathrm{ens}}.
\end{align*}

\textbf{Positive definiteness.}
For every ensemble $H_S^r$,
\[
    \left\langle H_S^r,H_S^r\right\rangle_{\mathrm{ens}}
    =
    \mathbb E_{ABC}\left[\|\bm h_S^r\|_2^2\right]
    \geq 0.
\]
A nonnegative random variable has expectation zero if and only if it is zero. Hence
\[
    \left\langle H_S^r,H_S^r\right\rangle_{\mathrm{ens}}=0
    \quad\Longleftrightarrow\quad
    \bm h_S^r=0.
\]
This is equivalent to $H_S^r=0$ in the ensemble space.

Thus the form satisfies all three inner product axioms.
\end{proof}

\begin{proposition}[Orthogonality of the ANOVA terms]
\label{prop:anova-orthogonality}
Under the full ensemble, any two distinct terms in the decomposition are orthogonal under the inner product in Definition~\ref{def:inner-product}:
\begin{equation}
    \left\langle H_S^r,H_T^r\right\rangle_{\mathrm{ens}}
    =0,
    \qquad
    S\neq T.
\end{equation}
Here, $S,T \subseteq \{A,B,C\}$
\end{proposition}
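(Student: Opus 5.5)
The plan is to reduce orthogonality to the zero-mean property of Lemma~\ref{lem:zero-mean}, using that under the full ensemble the expectation $\mathbb{E}_{ABC}$ factorizes into independent averages over $A$, $B$, and $C$. This holds because the ensemble is the full product $\mathcal{X}^3$ with uniform weight, so $\mathbb{E}_{ABC}=\mathbb{E}_X\mathbb{E}_{\{A,B,C\}\setminus X}$ for any variable $X$, and the order of averaging can be chosen freely.

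Fix $S\neq T$. Since $S\neq T$, one of the sets contains a variable not in the other. By the symmetry part of the preceding Proposition, we may assume without loss of generality that there is $X\in S\setminus T$. In particular $S$ is nonempty, which covers the case where $T=\emptyset$ (so $\bm h_T^r=\bm\mu^r$). The key observation is that $\bm h_T^r$ is a function only of the variables in $T$, as is clear from the definitions. Hence $\bm h_T^r$ is constant with respect to $X$.

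We then write
\[
\left\langle H_S^r,H_T^r\right\rangle_{\mathrm{ens}}
=\mathbb{E}_{\{A,B,C\}\setminus\{X\}}\Big[\mathbb{E}_X\big\langle \bm h_S^r,\bm h_T^r\big\rangle\Big]
=\mathbb{E}_{\{A,B,C\}\setminus\{X\}}\Big[\big\langle \mathbb{E}_X\bm h_S^r,\bm h_T^r\big\rangle\Big].
\]
The second equality uses linearity of the Euclidean inner product and of the finite average, together with the fact that $\bm h_T^r$ does not depend on $X$. Since $X\in S$ and $S$ is nonempty, Lemma~\ref{lem:zero-mean} gives $\mathbb{E}_X\bm h_S^r=\bm 0$, so the inner quantity vanishes identically and the whole expression is zero.

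The only step needing care is the bookkeeping behind the claim that $\bm h_T^r$ depends only on the variables in $T$. For instance, $\bm h_{AB}^r$ is built from $\mathbb{E}_C(\bm\phi^r)$, $\bm\mu^r$, $\bm h_A^r$, and $\bm h_B^r$, none of which depends on $C$. For $T=\{A,B,C\}$ the set $S\setminus T$ is empty, so the WLOG swap is needed there. The swap is legitimate because symmetry lets us exchange the roles of $S$ and $T$; no other obstacle is expected.
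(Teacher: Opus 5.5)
Your proposal is correct and follows essentially the same route as the paper: choose $X\in S\setminus T$ (using symmetry to swap $S$ and $T$ if needed), note that $\bm h_T^r$ does not depend on $X$ so the average over $X$ can be taken first, and conclude with the zero-mean lemma. Your remarks on the product structure of the full ensemble and on the edge cases $T=\emptyset$ and $T=\{A,B,C\}$ just spell out what the paper's ``without loss of generality'' leaves implicit.
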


\begin{proof}
Let $S\neq T$. Without loss of generality, choose a variable $X\in S\setminus T$. Since $\bm h_T^r$ does not depend on $X$, we may average over $X$ first:
\begin{align*}
    \left\langle H_S^r,H_T^r\right\rangle_{\mathrm{ens}}
    &=
    \mathbb E_{\{ABC\}\setminus\{X\}}
    \left[
        \left\langle
            \mathbb E_X\left[\bm h_S^r\right],
            \bm h_T^r
        \right\rangle
    \right] \\
    &=0,
\end{align*}
where the final equality follows from Lemma~\ref{lem:zero-mean}. Hence all distinct terms are pairwise orthogonal.
\end{proof}

\begin{corollary}[Energy decomposition]
\label{cor:anova-energy}
Under the conditions of Proposition~\ref{prop:anova-orthogonality},
\begin{equation}
\label{eq:anova-energy}
    \mathbb{E}_{ABC}
    \left\|
        \bm{\phi}^r(A,B,C)-\bm{h}_{\emptyset}^r
    \right\|_2^2
    =
    \sum_{\emptyset\neq S\subseteq\{A,B,C\}}
    \mathbb{E}_{S}
    \left\|
        \bm h_S^r
    \right\|_2^2.
\end{equation}
\end{corollary}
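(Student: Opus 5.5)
The plan is to expand the squared norm of the centered activation using the exact decomposition in Equation~\ref{eq:decomp}, and then eliminate all cross terms with Proposition~\ref{prop:anova-orthogonality}. First, subtract $\bm h_\emptyset^r=\bm\mu^r$ from both sides of the decomposition. This gives, pointwise for every $(A,B,C)\in\mathcal X^3$,
\[
\bm\phi^r(A,B,C)-\bm h_\emptyset^r=\sum_{\emptyset\neq S\subseteq\{A,B,C\}}\bm h_S^r .
\]
Taking the squared Euclidean norm and then $\mathbb E_{ABC}$, the left-hand side of Equation~\ref{eq:anova-energy} becomes $\big\langle \sum_S H_S^r,\sum_T H_T^r\big\rangle_{\mathrm{ens}}$, written in the form of Definition~\ref{def:inner-product}. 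Here the ensemble $\Phi^r-\bm\mu^r$ is identified with the sum of the nonempty-indexed ensembles, each viewed as a function on $\mathcal X^3$ that is constant in the variables outside its index set.

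Next, apply bilinearity and symmetry of $\langle\cdot,\cdot\rangle_{\mathrm{ens}}$, which follow from the Inner product proposition. This gives
\[
\sum_{S}\langle H_S^r,H_S^r\rangle_{\mathrm{ens}}+\sum_{S\neq T}\langle H_S^r,H_T^r\rangle_{\mathrm{ens}}.
\]
Every off-diagonal term vanishes by Proposition~\ref{prop:anova-orthogonality}, since $S\neq T$ are distinct subsets. The diagonal terms are $\mathbb E_{ABC}\|\bm h_S^r\|_2^2$.

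The last step is to replace $\mathbb E_{ABC}$ by $\mathbb E_S$ in each diagonal term. This holds because $\bm h_S^r$ depends only on the variables in $S$, and the ensemble is the full product $\mathcal X^3$ with uniform weights. Averaging over the variables outside $S$ is therefore averaging a constant, so $\mathbb E_{ABC}\|\bm h_S^r\|_2^2=\mathbb E_S\|\bm h_S^r\|_2^2$.

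The only point needing care is the hypothesis ``under the full ensemble''. Both the orthogonality result and the $\mathbb E_{ABC}\to\mathbb E_S$ reduction require the average to be over the complete uniform product grid. For the orthogonality result, this is what lets us average over $X\in S\setminus T$ first, using Lemma~\ref{lem:zero-mean}. If some prompts were removed, for example coincident triples, both facts could fail. I will therefore state explicitly that the corollary is applied to the unmasked ensemble, and that any masked energy fractions are understood relative to it.
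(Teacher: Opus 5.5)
Your proposal is correct and matches the paper's proof: you subtract $\bm h_\emptyset^r$ via the exact decomposition, expand the squared norm, eliminate the cross terms with Proposition~\ref{prop:anova-orthogonality}, and reduce $\mathbb{E}_{ABC}$ to $\mathbb{E}_S$ because $\bm h_S^r$ depends only on the variables in $S$. Your remark that both steps need the full uniform product grid, so the identity need not hold once coincident triples are masked, is a sensible caveat that the paper leaves implicit in the phrase ``under the full ensemble''.
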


\begin{proof}
By Equation~\ref{eq:decomp},
\begin{equation*}
    \bm{\phi}^r-\bm{h}_{\emptyset}^r
    =
    \sum_{\emptyset\neq S\subseteq\{A,B,C\}}
    \bm h_S^r.
\end{equation*}
Expanding the squared norm and applying
Proposition~\ref{prop:anova-orthogonality} eliminates all cross terms:
\begin{align*}
    \mathbb{E}_{ABC}
    \left\|
        \bm{\phi}^r-\bm{h}_{\emptyset}^r
    \right\|_2^2
    &=
    \sum_{\emptyset\neq S\subseteq\{A,B,C\}}
    \mathbb{E}_{ABC}
    \left\|
        \bm h_S^r
    \right\|_2^2 \\
    &=
    \sum_{\emptyset\neq S\subseteq\{A,B,C\}}
    \mathbb{E}_{S}
    \left\|
        \bm h_S^r
    \right\|_2^2,
\end{align*}
where the second equality follows because $\bm h_S^r$ depends only on the variables contained in $S$.
\end{proof}

\subsection{Mode fraction and cluster analysis}
\label{app:mode-cluster}
In Section~\ref{sec:fourier} of the main text, we defined mode fraction using the DFT. It can instead be defined directly without invoking the DFT. Here, we show the equivalence. Figure~\ref{fig:decomposition} suggests that the interaction ensembles form clusters indexed by $\gamma$ or $D$.

\begin{definition}[Cluster vectors]
\label{def:cluster-vectors}
For $H_{AB}^r$, we define the centroid of the cluster indexed by
$\gamma$ as
\begin{equation}
\label{eq:cluster-vector-gamma}
    \bar{\bm h}_\gamma^r
    :=
    \mathbb E\!\left[
        \bm h_{AB}^r
        \,\middle|\,
        B-A=\gamma
    \right].
\end{equation}
Similarly, for $H_{ABC}^r$, we define the centroid of the cluster indexed by $D$ as
\begin{equation}
\label{eq:cluster-vector-D}
    \bar{\bm h}_D^r
    :=
    \mathbb E\!\left[
        \bm h_{ABC}^r
        \,\middle|\,
        C+B-A=D
    \right].
\end{equation}
\end{definition}
Note that these are the same vectors used in the steering experiments in Section~\ref{sec:steering}. We now define the corresponding projection operators for each ensemble.
\begin{definition}[Projection operator]
Let $P_\gamma$ be the projection operator that operates on $H_{AB}^r$ to give the corresponding cluster vector:
    \begin{equation}
        P_\gamma H_{AB}^r:=\bar{\bm h}_{\gamma}^r
    \end{equation}
Similarly, we define $P_D$ as
\begin{equation}
    P_D H_{ABC}^r:=\bar{\bm h}_{D}^r
\end{equation}
\end{definition}
Thus, $P_\gamma H_{AB}^r$ is constant over pairs with the same value of $\gamma$, while $P_D H_{ABC}^r$ is constant over triples with the same value of $D$. Now we define cluster fraction using these projection operators and show that it is equivalent to the definition of mode fraction in Section~\ref{sec:fourier}.
\begin{definition}[Cluster fraction]
\label{def:cluster}
Let $c(H^r)$ be the cluster fraction defined by
\begin{align}
\label{eq:cluster-fraction-gamma}
    c(H_{AB}^r)
    :=
    \frac{
        \mathbb E_{\gamma}
        \left\|
            P_\gamma H_{AB}^r
        \right\|_2^2
    }{
        \mathbb E_{AB}
        \left\|
            \bm h_{AB}^r
        \right\|_2^2
    } 
    =
    \frac{
        \mathbb E_{\gamma}
        \left\|
            \bar{\bm h}_{\gamma}^r
        \right\|_2^2
    }{
        \mathbb E_{AB}
        \left\|
            \bm h_{AB}^r
        \right\|_2^2
    }
\end{align}
and
\begin{align}
\label{eq:cluster-fraction-D}
    c(H_{ABC}^r)
    :=
    \frac{
        \mathbb E_{D}
        \left\|
            P_D H_{ABC}^r
        \right\|_2^2
    }{
        \mathbb E_{ABC}
        \left\|
            \bm h_{ABC}^r
        \right\|_2^2
    } 
    =
    \frac{
        \mathbb E_{D}
        \left\|
            \bar{\bm h}_{D}^r
        \right\|_2^2
    }{
        \mathbb E_{ABC}
        \left\|
            \bm h_{ABC}^r
        \right\|_2^2
    }
\end{align}
\end{definition}

\begin{proposition}[Equivalence between cluster fraction and mode fraction]
\label{prop:cluster-mode-equivalence}
If $c(H^r)$ is the cluster fraction as in Definition~\ref{def:cluster} and $m(H^r)$ is the mode fraction as defined in Section~\ref{sec:fourier}, then
\begin{equation}
    c(H_S^r)
    =
    m(H_S^r),
\end{equation}
for $S \in \{AB,BC,CA,ABC\}$.
\end{proposition}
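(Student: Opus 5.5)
The idea is to read both fractions as the same ratio: the squared norm of the orthogonal projection of $H_S^r$ onto the subspace of ensembles depending on the relevant combination, divided by the total squared norm. The mode fraction computes this ratio in the Fourier basis, the cluster fraction in the original index basis, and Parseval links the two. We carry this out for $S=AB$; the cases $BC$ and $CA$ follow by relabeling, and $S=ABC$ uses the same argument with $\gamma$ replaced by $D$ and the line $\{k_A=-k_B\}$ replaced by $\{(-k,k,k)\}$.

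\textbf{Denominators.} By Parseval for the two-dimensional DFT on $\mathbb{Z}_{12}^2$, applied coordinatewise in $\mathbb{R}^{d_{\text{model}}}$,
\[
\sum_{k_A,k_B}\|\widehat{\bm h}_{k_Ak_B}^r\|_2^2 = 144\sum_{A,B}\|\bm h_{AB}^r\|_2^2 = 144^2\,\mathbb{E}_{AB}\|\bm h_{AB}^r\|_2^2 .
\]
So the two denominators agree up to the factor $144^2$.

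\textbf{Numerators.} Change variables to $(A,\gamma)$ with $\gamma=B-A$; this is a bijection of $\mathbb{Z}_{12}^2$. Then
\[
\widehat{\bm h}^r_{k_Ak_B}=\sum_{\gamma}e^{-2\pi i k_B\gamma/12}\sum_A \bm h^r_{A,A+\gamma}\,e^{-2\pi i (k_A+k_B)A/12}.
\]
On the line $k_A+k_B\equiv 0$ the inner sum equals $12\,\bar{\bm h}^r_\gamma$, by Definition~\ref{def:cluster-vectors}, since each $\gamma$-cluster contains exactly $12$ pairs. Hence $\widehat{\bm h}^r_{-k,k}=12\,\widehat{\bar{\bm h}}^r_k$, where $\widehat{\bar{\bm h}}^r_k$ is the one-dimensional DFT of $\gamma\mapsto\bar{\bm h}^r_\gamma$. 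One-dimensional Parseval then gives
\[
\sum_k\|\widehat{\bm h}^r_{-k,k}\|_2^2 = 144\cdot 12\sum_\gamma\|\bar{\bm h}^r_\gamma\|_2^2 = 144^2\,\mathbb{E}_\gamma\|\bar{\bm h}^r_\gamma\|_2^2 .
\]
Dividing numerator by denominator, the factors $144^2$ cancel and $m(H_{AB}^r)=c(H_{AB}^r)$.

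\textbf{The three-way case.} The main obstacle is the bookkeeping for $ABC$, that is, choosing coordinates that isolate $D$. We use $(A,B,D)$ with $C=D-B+A$, which is again a bijection of $\mathbb{Z}_{12}^3$. Substituting into the exponent $k_AA+k_BB+k_CC$ gives
\[
(k_A+k_C)A+(k_B-k_C)B+k_CD .
\]
The sums over $A$ and $B$ collapse to $144\,\bar{\bm h}^r_D$ exactly when $k_A=-k_C$ and $k_B=k_C$, which is the mode set $(-k,k,k)$. Each $D$-cluster contains $144$ triples. Parseval then yields $144^2\cdot 12\sum_D\|\bar{\bm h}^r_D\|_2^2$ in the numerator and $12^3\sum_{A,B,C}\|\bm h^r_{ABC}\|_2^2$ in the denominator. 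Converting both sums to expectations, the constants cancel to give $c=m$.

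Two details need checking. First, the frequency conditions hold modulo $12$ (or modulo $N$ for the other domains), so the changes of variables and the mode lines must be read in $\mathbb{Z}_N$. Second, in the $BC$ and $CA$ cases the sign convention must match the pairwise difference used, for example $\gamma'=C-B$ corresponds to modes $k_B=-k_C$. The argument does not use Lemma~\ref{lem:zero-mean}: the identity is purely a Parseval statement about orthogonal projection onto functions of $\gamma$ (or $D$).
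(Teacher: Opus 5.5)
Your proof is correct and follows the paper's argument. Both change variables to $(A,\gamma)$, observe that the Fourier coefficients on the line $k_A=-k_B$ equal $N$ times the DFT of the cluster centroids $\bar{\bm h}^r_\gamma$, and finish with Parseval. The differences are only in bookkeeping: you apply one-dimensional Parseval directly to $\gamma\mapsto\bar{\bm h}^r_\gamma$, whereas the paper applies two-dimensional Parseval to the lifted projection $P_\gamma H_{AB}^r$, whose DFT vanishes off the line. You also write out the $(A,B,D)$ coordinates and constants for the three-way case, which the paper leaves as ``similar.''
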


\begin{proof}
We suppress the superscript $r$ for readability.

First consider $H_{AB}$. Let
\begin{equation*}
    \bm p_{AB}
    :=
    P_\gamma H_{AB}
    =
    \bar{\bm h}_\gamma.
\end{equation*}
Here, $\gamma=B-A$, and hence $B=A+\gamma$. The discrete Fourier transform (DFT) of $\bm p$ is
\begin{align*}
    \widehat{\bm p}_{k_Ak_B}
    &=
    \sum_{A,B}
    \bm p_{AB}
    e^{-2\pi i(k_AA+k_BB)/N}\\
    &=
    \sum_{A,\gamma}
    \bar{\bm h}_{\gamma}
    e^{-2\pi i[(k_A+k_B)A+k_B\gamma]/N}\\
    &=
    \left(
        \sum_A
        e^{-2\pi i(k_A+k_B)A/N}
    \right)
    \left(
        \sum_\gamma
        \bar{\bm h}_{\gamma}
        e^{-2\pi i k_B\gamma/N}
    \right).
\end{align*}
Here, $N$ is the length of the cycle; $N=12$ for months and hours, while $N=7$ for weekdays and musical notes. The sum over $A$ vanishes unless $k_A+k_B=0$. Therefore,
\begin{equation*}
    \widehat{\bm p}_{k_Ak_B}=\bm 0
    \qquad
    \text{whenever } k_A\neq-k_B.
\end{equation*}

For frequencies $(k_A,k_B)=(-k,k)$,
\begin{align}
    \widehat{\bm p}_{-k,k}
    &=
    N\sum_\gamma
    \bar{\bm h}_{\gamma}
    e^{-2\pi i k\gamma/N}\\
    &=
    \sum_{A,\gamma}
    \bm h_{A,A+\gamma}
    e^{-2\pi i k\gamma/N}\\
    &=
    \sum_{A,B}
    \bm h_{AB}
    e^{-2\pi i k(B-A)/N}\\
    &=
    \widehat{\bm h}_{-k,k},\label{eq:equiv}
\end{align}
where the second equality uses
\begin{equation*}
    \bar{\bm h}_{\gamma}
    =
    \frac{1}{N}
    \sum_A
    \bm h_{A,A+\gamma}.
\end{equation*}
Thus, the Fourier transform of $P_\gamma H_{AB}$ agrees with that of
$H_{AB}$ on the modes $k_A=-k_B$ and vanishes on all other modes.

Consider $c(H_{AB})$:
\begin{align*}
    c(H_{AB})
    &=
    \frac{
        \mathbb E_{AB}
        \left\|
            \bm p_{AB}
        \right\|_2^2
    }{
        \mathbb E_{AB}
        \left\|
            \bm h_{AB}
        \right\|_2^2
    }\\
    &=
    \frac{
        \sum_{k_A=-k_B}
        \left\|
            \widehat{\bm p}_{k_Ak_B}
        \right\|_2^2
    }{
        \sum_{k_A,k_B}
        \left\|
            \widehat{\bm h}_{k_Ak_B}
        \right\|_2^2
    }\\
    &=
    \frac{
        \sum_{k_A=-k_B}
        \left\|
            \widehat{\bm h}_{k_Ak_B}
        \right\|_2^2
    }{
        \sum_{k_A,k_B}
        \left\|
            \widehat{\bm h}_{k_Ak_B}
        \right\|_2^2
    }\\
    &=
    m(H_{AB}),
\end{align*}
where we get the second equality using Parseval's identity and the third equality using Equation~\ref{eq:equiv}.

A similar proof holds for $c(H_{ABC}) = m(H_{ABC})$.
\end{proof}

\subsection{Steering vector}
\label{app:steering-vector}
Finally, we show the correspondence between the choice of steering vector as in Section~\ref{sec:steering} and the conditional mean of the activation ensemble.
\begin{proposition}\label{prop:steer-counter}
    If $\bar{\bm h}_\gamma^r$ is the vector according to Definition~\ref{def:cluster-vectors}, then
    \begin{equation}
        \bar{\bm h}_\gamma^r = \mathbb{E}_{ABC}\left[ \bm \phi^r(A,B,C) \; \vert \; B-A = \gamma \right] - \bm \mu^r.
    \end{equation}
    Similarly,
    \begin{equation}
        \bar{\bm h}_D^r = \mathbb{E}_{ABC}\left[ \bm \phi^r(A,B,C) \; \vert \; C+B-A = D \right] - \bm \mu^r.
    \end{equation}
\end{proposition}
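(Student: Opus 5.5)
The plan is to substitute the ANOVA decomposition of Equation~\ref{eq:decomp} into the conditional expectation. Then we show that every term other than $\bm\mu^r$ and $\bm h_{AB}^r$ (respectively $\bm h_{ABC}^r$) averages to zero on the level set $\{B-A=\gamma\}$ (respectively $\{C+B-A=D\}$). Throughout, $(A,B,C)$ is uniform on $\mathcal X^3$ with $|\mathcal X|=N$. The level set $\{B-A=\gamma\}$ then consists of exactly $N^2$ triples $(A,A+\gamma,C)$. On it, $A$ is uniform and $C$ is uniform and independent of $A$, while $B=A+\gamma$ is also marginally uniform. Likewise, on $\{C+B-A=D\}$ any two of $A,B,C$ are free and independent uniform, and the third is determined. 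This pairwise-uniformity observation carries the whole argument.

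\textbf{First identity.} Write
\[
\mathbb E[\bm\phi^r\mid B-A=\gamma]=\bm\mu^r+\sum_{\emptyset\neq S}\mathbb E[\bm h_S^r\mid B-A=\gamma].
\]
We then treat the terms one at a time.
\begin{itemize}
\item For $S\in\{A,B,C\}$, each single variable is uniform on the level set, so the conditional mean is $\mathbb E_X\bm h_X^r=\bm 0$ by Lemma~\ref{lem:zero-mean}.
\item For $S\in\{BC,CA\}$, the pair $(B,C)$, resp.\ $(C,A)$, is uniform and independent on the level set. Hence the conditional mean equals an unconditional average, which vanishes by Lemma~\ref{lem:zero-mean}, averaging over $C$.
\item For $S=ABC$, we condition on $(A,B)$ with $B=A+\gamma$ and first average over $C$, which is free and uniform. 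This gives $\mathbb E_C\bm h_{ABC}^r=\bm 0$.
\item For $S=AB$, the term survives and equals $\mathbb E[\bm h_{AB}^r\mid B-A=\gamma]=\bar{\bm h}_\gamma^r$ by Definition~\ref{def:cluster-vectors}.
\end{itemize}
Rearranging yields the claim.

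\textbf{Second identity.} On $\{C+B-A=D\}$ every singleton and every pair of variables is uniformly distributed with independent coordinates. The argument is therefore the same: each $\bm h_X^r$ and each pairwise term $\bm h_{XY}^r$ averages to zero by Lemma~\ref{lem:zero-mean}, applied over the uniform joint law of the pair. Only $\bm\mu^r$ and $\mathbb E[\bm h_{ABC}^r\mid D]=\bar{\bm h}_D^r$ remain.

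The step needing the most care is the distributional claim: that on each level set the relevant marginals are exactly uniform and independent. I would verify it by parametrizing the level set, e.g.\ $(A,C)\mapsto(A,C+A-D,C)$ for the $D$-level set, and noting that each projection onto the pairs used is a bijection onto $\mathcal X^2$, using that translation mod $N$ is a bijection. Once this is checked, everything else follows from Lemma~\ref{lem:zero-mean} and linearity of expectation.
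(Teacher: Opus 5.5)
Your proposal is correct and follows the paper's route: expand the conditional mean via the ANOVA decomposition and discard every term except $\bm h_{AB}^r$ (resp.\ $\bm h_{ABC}^r$) using Lemma~\ref{lem:zero-mean}, and your check that the relevant singletons and pairs are uniform on each level set supplies the justification the paper leaves implicit when it applies that lemma to conditional expectations. One harmless slip: the level set $\{C+B-A=D\}$ is parametrized by $(A,C)\mapsto(A,\,A+D-C,\,C)$, not $(A,\,C+A-D,\,C)$, but the bijection argument goes through unchanged.
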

\begin{proof}
    To see this, start with the conditional average over $\bm \phi^r$:
\begin{align*}
    \mathbb E\!\left[
        \bm\phi^r
        \,\middle|\,
        B-A=\gamma
    \right]
    &=
    \bm\mu^r
    +\mathbb E\!\left[
        \bm h_A^r
        \,\middle|\,
        B-A=\gamma
    \right]
    +\mathbb E\!\left[
        \bm h_B^r
        \,\middle|\,
        B-A=\gamma
    \right] \\
    &\quad
    +\mathbb E\!\left[
        \bm h_C^r
        \,\middle|\,
        B-A=\gamma
    \right]
    +\mathbb E\!\left[
        \bm h_{AB}^r
        \,\middle|\,
        B-A=\gamma
    \right] \\
    &\quad
    +\mathbb E\!\left[
        \bm h_{BC}^r
        \,\middle|\,
        B-A=\gamma
    \right]
    +\mathbb E\!\left[
        \bm h_{CA}^r
        \,\middle|\,
        B-A=\gamma
    \right] \\
    &\quad
    +\mathbb E\!\left[
        \bm h_{ABC}^r
        \,\middle|\,
        B-A=\gamma
    \right].
\end{align*}

Using Lemma~\ref{lem:zero-mean}, we get
\begin{equation*}
    \mathbb E\!\left[
        \bm h_A^r
        \,\middle|\,
        B-A=\gamma
    \right]
    =
    \mathbb E\!\left[
        \bm h_B^r
        \,\middle|\,
        B-A=\gamma
    \right]
    =
    \mathbb E\!\left[
        \bm h_C^r
        \,\middle|\,
        B-A=\gamma
    \right]
    =
    \bm 0,
\end{equation*}
and
\begin{equation*}
    \mathbb E\!\left[
        \bm h_{BC}^r
        \,\middle|\,
        B-A=\gamma
    \right]
    =
    \mathbb E\!\left[
        \bm h_{CA}^r
        \,\middle|\,
        B-A=\gamma
    \right]
    =
    \mathbb E\!\left[
        \bm h_{ABC}^r
        \,\middle|\,
        B-A=\gamma
    \right]
    =
    \bm 0.
\end{equation*}

Hence,
\begin{equation*}
    \mathbb E\!\left[
        \bm\phi^r
        \,\middle|\,
        B-A=\gamma
    \right]
    =
    \bm\mu^r
    +
    \mathbb E\!\left[
        \bm h_{AB}^r
        \,\middle|\,
        B-A=\gamma
    \right].
\end{equation*}
By definition,
\begin{equation*}
    \bar{\bm h}_\gamma^r
    =
    \mathbb E\!\left[
        \bm h_{AB}^r
        \,\middle|\,
        B-A=\gamma
    \right],
\end{equation*}
and therefore
\begin{equation*}
    \bar{\bm h}_\gamma^r
    =
    \mathbb E\!\left[
        \bm\phi^r
        \,\middle|\,
        B-A=\gamma
    \right]
    -
    \bm\mu^r.
\end{equation*}
A similar proof holds for $\bar{\bm h}_D^r$.
\end{proof}

Therefore, we get the following identities using Proposition~\ref{prop:steer-counter}:
\begin{align}
    &\bar{\bm h}_{\gamma+\delta}^r-\bar{\bm h}_{\gamma}^r
    =
    \mathbb E\!\left[
        \bm\phi^r
        \,\middle|\,
        B-A=\gamma+\delta
    \right]
    -
    \mathbb E\!\left[
        \bm\phi^r
        \,\middle|\,
        B-A=\gamma
    \right],\\
    &\bar{\bm h}_{D+\delta}^r-\bar{\bm h}_{D}^r
    =
    \mathbb E\!\left[
        \bm\phi^r
        \,\middle|\,
        C+B-A=D+\delta
    \right]
    -
    \mathbb E\!\left[
        \bm\phi^r
        \,\middle|\,
        C+B-A=D
    \right].
\end{align}
Thus, the steering using $\bar{\bm h}_{\gamma+\delta}^r-\bar{\bm h}_{\gamma}^r$ is equivalent to the difference-of-means steering.

\subsection{Coincident variables}\label{app:coincident-variables}
We always compute the interaction ensembles using the full set of prompts, including cases with $A=B$, $B=C$, or $C=A$. Table~\ref{tab:llama8b-months-replicate0-layer15-norms} reports the mean norm of the interaction vectors grouped by $\gamma=B-A$, $\gamma'=C-B$, and $\gamma''=A-C$ for a representative ensemble using the months prompt template at layer 15 of Llama-3.1-8B. The vectors corresponding to $\gamma=0,\; \gamma'=0,\;\gamma''=0$ have substantially larger norms than the rest of the ensemble and can therefore bias quantitative results toward coincident-variable cases. We exclude such prompts when reporting the results in the main text. Unless stated otherwise, we repeat our analyses without this exclusion in the following appendices. This also serves as a sanity check that our conclusions are not driven by the coincident-variable cases.

\begin{table}[t]
\centering
\caption{Mean norm of each interaction vector categorized by
$\gamma=B-A$, $\gamma'=C-B$, and $\gamma''=A-C$, respectively, for a single replicate of the months prompt template at layer 15 of Llama-3.1-8B. The interaction vectors associated with zero difference have substantially larger norms than the remaining classes. This motivates the exclusion of coincident-variable prompts from the quantitative analyses in the main text.}
\label{tab:llama8b-months-replicate0-layer15-norms}

\vspace{\baselineskip}

\begingroup
\setlength{\tabcolsep}{2pt}

\begin{tabular*}{\linewidth}{
@{\extracolsep{\fill}}
l
*{12}{c}
@{}
}
\toprule
\textbf{Ensemble}
& $\mathbf{0}$ & $\mathbf{1}$ & $\mathbf{2}$ & $\mathbf{3}$
& $\mathbf{4}$ & $\mathbf{5}$ & $\mathbf{6}$ & $\mathbf{7}$
& $\mathbf{8}$ & $\mathbf{9}$ & $\mathbf{10}$ & $\mathbf{11}$ \\
\midrule

$H_{AB}$
& 7.445 & 5.875 & 3.899 & 3.568
& 3.144 & 2.551 & 2.391 & 2.307
& 2.595 & 2.700 & 3.188 & 4.023 \\

\addlinespace[0.4em]

$H_{BC}$
& 4.343 & 1.921 & 0.918 & 0.961
& 0.971 & 1.021 & 0.853 & 0.976
& 1.022 & 1.378 & 1.377 & 1.575 \\

\addlinespace[0.4em]

$H_{CA}$
& 6.448 & 3.013 & 1.311 & 1.457
& 1.478 & 1.624 & 1.476 & 1.487
& 1.263 & 1.220 & 1.589 & 3.189 \\

\bottomrule
\end{tabular*}

\endgroup
\end{table}

\section{Mode fraction}\label{app:mode-fraction}
\begin{figure}
    \centering
    \includegraphics[width=\linewidth]{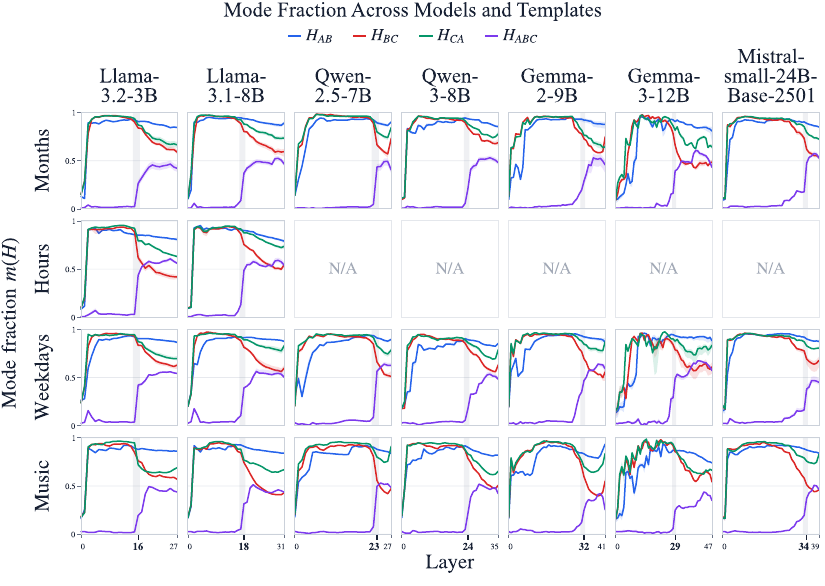}
    \caption{Mode fraction of the interaction ensembles across models and cyclic domains. Columns correspond to models and rows to domains; shaded bands show the central 80\% interval across 10 replicates. The vertical gray band marks the transition in causal relevance from \(H_{AB}^r\) to \(H_{ABC}^r\) identified by the ablation and replacement experiments. Across all models and prompt templates, the pairwise interaction ensembles are strongly organized by their corresponding pairwise differences before \(H_{ABC}^r\) becomes organized by \(D\). The transition in \(H_{ABC}^r\) is more distributed across depth in the Gemma models.}
    \label{fig:app-mode-fraction}
\end{figure}

The main text reports the layerwise organization of the interaction ensembles for Llama-3.1-8B on the months domain. Here, we test whether the same pattern generalizes across models and cyclic domains. Recall the definition of mode fraction for $H_{AB}^r$ as defined in the main text:
\begin{equation}
m(H_{AB}^r) := \frac{\sum_{k_A=-k_B}\norm{\widehat{\bm h}^r_{k_Ak_B}}^2}{\sum_{k_A,k_B}\norm{\widehat{\bm h}^r_{k_Ak_B}}^2}.
\end{equation}

Similarly, we define the corresponding mode fractions for $H_{BC}^r$ and $H_{CA}^r$ using modes satisfying $k_B+k_C =0$ and $k_C+k_A=0$ respectively. For $H_{ABC}^r$, the mode fraction is defined using modes $(k_A, k_B,k_C) = (-k,k,k)$.

Figure~\ref{fig:app-mode-fraction} shows the mode fractions for all interaction ensembles using the prompts and models as detailed in Table~\ref{tab:frozen-prompts}. Across all models and prompt templates, the second-order interaction ensembles become strongly organized by their corresponding pairwise differences before the three-way interaction becomes organized by \(D\). The increase in \(m(H_{ABC}^r)\) occurs near the same relative depth within a particular model across domains. The transition is sharp for most models but is distributed across several layers in the Gemma models. Notice that we report Llama-3.1-8B on the months domain again: we include prompts with coincident variables and we see that the mode fraction is inflated as a result of their inclusion.

\section{Energy Fraction}\label{app:energy-fraction}
\begin{figure}
    \centering
    \includegraphics[width=\linewidth]{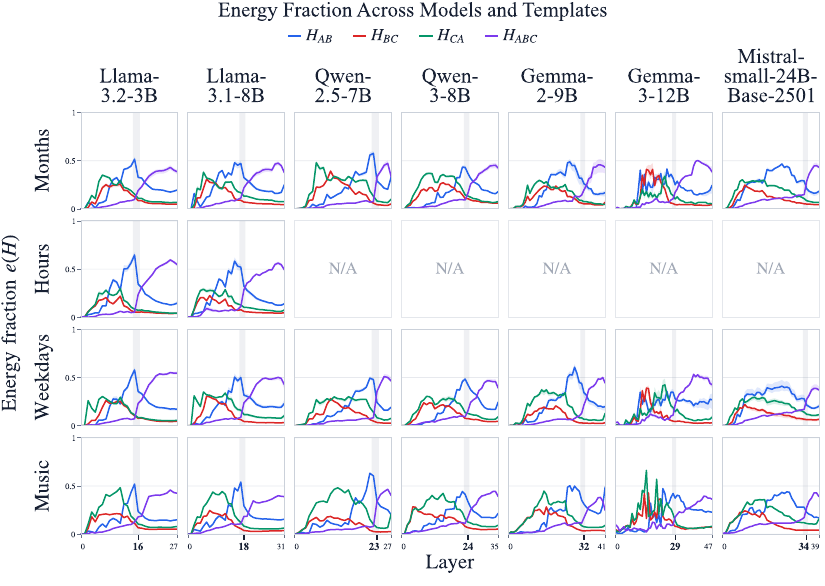}
    \caption{Energy fraction of the interaction ensembles across models and cyclic domains. Columns correspond to models and rows to domains; shaded bands show the central 80\% interval across 10 replicates. The vertical gray band marks the transition in causal relevance identified by the ablation and replacement experiments. Across all models and prompt templates, the energy fraction of \(H_{AB}^r\) decreases near the same depth at which the energy fraction of \(H_{ABC}^r\) increases.}
    \label{fig:app-energy-fraction}
\end{figure}
Recall Corollary~\ref{cor:anova-energy} from Appendix~\ref{app:anova}:
\begin{equation}
\mathbb{E}_{A,B,C}
\left\|
\bm{\phi}^r(A,B,C)-\bm{\mu}^r
\right\|_2^2
=
\sum_{S}
\mathbb{E}_{S}
\left\|
\bm h_S^r
\right\|^2_2,
\end{equation}
where $S$ ranges over non-empty subsets of $\{A,B,C\}$. This motivates us to define the \textit{energy fraction} of each ensemble as
\begin{equation}
    e(H_{S}^r) = \frac{\mathbb{E}_{S}\Big(\norm{\bm{h}^r_{S}}^2\Big)}{\mathbb{E}_{ABC}\Big(\norm{\bm{\phi}^r(A,B,C)-\bm{\mu}^r}^2\Big)}, \qquad \text{with} \qquad \sum_S e(H_S^r) = 1.
\end{equation}
We next test whether the layerwise redistribution of energy fraction observed in the main text generalizes across models and domains. Figure~\ref{fig:app-energy-fraction} shows the energy fraction for each interaction ensemble. Across all models and prompt templates, the energy fraction of \(H_{AB}^r\) decreases near the same depth at which the energy fraction of \(H_{ABC}^r\) increases. As with mode fraction, this transition is spread over several layers in the Gemma models.

\section{Controls for Llama-3.1-8B}
\label{app:controls}
The strong organization of the pairwise interaction ensembles by their corresponding pairwise differences does not by itself imply that this structure is task-specific. We therefore test whether similar organization arises when the cyclic tokens co-occur without the relational computation used in the main task. We design three control prompts as listed in Table~\ref{tab:control-prompts}. For the \emph{non-cyclic} control, we construct a template similar to the Llama-3.1-8B months template but use $A,B,C \in $ \{berry, apple, orange, banana, peach, pear, fig, mango, lemon, lime, cherry, plum\} instead of the usual calendar months. The \emph{off-by-1} control retains \(A,B,C\) as calendar months, but the answer depends only on \(C\); \(A\) and \(B\) therefore co-occur in the prompt without contributing to the answer. The \emph{list} control also retains calendar months but removes the arithmetic relation entirely; \(A,B,C\) are presented as items in a list and the model is asked to return \(B\).

\begin{figure}
    \centering
    \includegraphics[width=\linewidth]{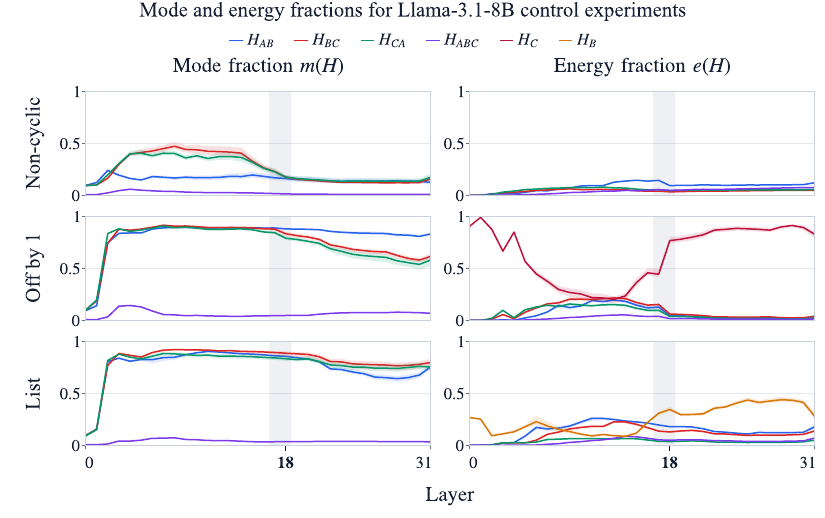}
    \caption{Mode and energy fractions for three control prompt templates using Llama-3.1-8B. The \emph{non-cyclic} control replaces months with an unordered vocabulary, while the \emph{off-by-1} and \emph{list} controls retain calendar months but remove the relational computation of the main task. In the latter two controls, the pairwise interaction ensembles remain strongly organized by their corresponding pairwise differences despite having a small energy fraction. In contrast, \(H_{ABC}^r\) remains negligible in both mode and energy fraction across all controls. All results reported here exclude vectors corresponding to the coincident-variable cases.}
    \label{fig:app-control}
\end{figure}

In Figure~\ref{fig:app-control}, we have computed the mode and energy fractions for the control prompt templates. In all three controls, the mode and energy fractions of \(H_{ABC}^r\) remain close to zero. In contrast, when calendar months co-occur in the \emph{off-by-1} and \emph{list} controls, \(H_{AB}^r\), \(H_{BC}^r\), and \(H_{CA}^r\) can still have large mode fractions, but retain a small energy fraction. This suggests that pairwise-difference geometry can arise from co-occurrence of cyclic tokens alone, whereas the \(D\)-organized interaction ensemble $H_{ABC}^r$ requires the task structure.

\begin{table}[t]
\centering
\normalsize

\caption{Control prompt templates for Llama-3.1-8B.}
\label{tab:control-prompts}
\vspace{\baselineskip}

\begingroup
\setlength{\tabcolsep}{0pt}

\fvset{
  fontsize=\normalsize,
  breaklines=true,
  breakanywhere=true,
  breaksymbolleft={},
  breaksymbolright={}
}

\begin{tabularx}{\linewidth}{
  @{}
  >{\raggedright\arraybackslash}p{0.85in}
  @{\hspace{0.12in}}
  >{\raggedright\arraybackslash}X
  @{}
}
\toprule
\textbf{Control} & \textbf{Prompt template} \\
\midrule

Non-cyclic
&
\Verb|Hello {name}, the {noun} is scheduled between {A} {dates} and {B} {dates}. This is the same time as between {C} {dates} and|
\\[0.6em]

Off by 1
&
\Verb|Hello {name}, the {noun} is scheduled between {A} {dates} and {B} {dates}. The month that comes after {C} {dates} is|
\\[0.6em]

List
&
\Verb|Hello {name}, the talks are on {A} {dates}, on {B} {dates}, and on {C} {dates}. The second talk is on|
\\

\bottomrule
\end{tabularx}
\endgroup
\end{table}

\section{Causal handoff}
\label{app:causal-handoff}
Section~\ref{sec:causal} shows that \(H_{AB}^r\) and \(H_{ABC}^r\) are causally relevant at different depths, but this does not establish whether the later \(H_{ABC}^r\) representation depends on the earlier \(H_{AB}^r\) representation. To test this, we ablate \(H_{AB}^r\) at layer 15 of Llama-3.1-8B on the months domain, allow the model to run normally thereafter, and recompute the ANOVA decomposition at every subsequent layer. We then compare the mode and energy fractions of the resulting \(H_{ABC}^r\) with those of the unmodified model.

Figure~\ref{fig:app-mode-energy-downstream} shows the comparison of mode and energy fractions between the ablated and the unmodified ensemble for $H_{ABC}^r$. In the unmodified case, both the mode fraction and energy fraction of \(H_{ABC}^r\) rise sharply around layer 18. After ablating \(H_{AB}^r\) at layer 15, neither transition occurs at that depth. The mode fraction recovers slightly in later layers, but the energy fraction remains substantially reduced. Thus, the emergence of the later \(H_{ABC}^r\) representation depends on the earlier \(H_{AB}^r\) representation.

\begin{figure}
    \centering
    \includegraphics[width=\linewidth]{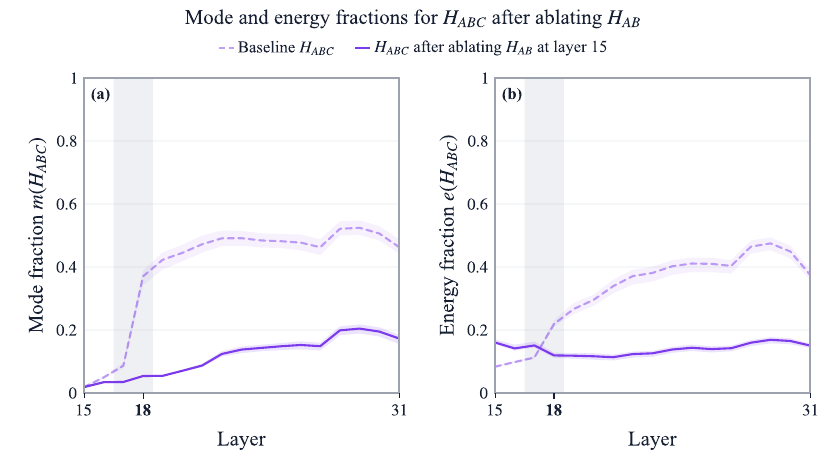}
    \caption{Downstream \(H_{ABC}^r\) after ablating \(H_{AB}^r\) at layer 15. Solid curves show the mode and energy fractions of the recomputed \(H_{ABC}^r\) after the intervention; dashed curves show the corresponding quantities in the unmodified model. The vertical gray band marks layer 18, where \(H_{ABC}^r\) normally becomes organized by \(D\) and increases sharply in energy fraction. \textbf{(a)} After ablating \(H_{AB}^r\), the mode fraction does not undergo its normal increase at layer 18, although some organization recovers later but never regains the unmodified baseline. \textbf{(b)} The energy fraction likewise fails to increase at layer 18 and remains substantially below that of the unmodified ensemble. These results show that the normal emergence of \(H_{ABC}^r\) depends on the earlier \(H_{AB}^r\) representation.}
    \label{fig:app-mode-energy-downstream}
\end{figure}

\section{Ensemble ablation}\label{app:ablation}
We next test whether the layerwise ablation pattern from the main text generalizes across models and cyclic concepts. We conduct the ensemble ablation experiments for all prompts and models in Table~\ref{tab:frozen-prompts}. Recall that we intervene on the last token position at a particular layer using
\begin{equation}
    \widetilde{\bm{\phi}}^r(A,B,C) = \bm \phi^r(A,B,C) - \bm h^r_S,
\end{equation}
for all $ A,B,C \in \mathcal{X}$ and $S \in \{AB,BC,CA, ABC\}$.

Figures~\ref{fig:app-top-1-interaction-ensembles} and~\ref{fig:app-top-3-interaction-ensembles} report the results of the ensemble ablation experiments using top-1 and top-3 accuracy as metrics, respectively. We observe that across every model and domain, the layerwise behavior of interaction ensembles under ablation remains the same. Ablating $H_{AB}^r$ always affects the accuracy at an earlier network depth than ablating $H_{ABC}^r$. However, ablating $H_{BC}^r$ or $H_{CA}^r$ has little to no effect on performance. Ablating $H_{ABC}^r$ at later layers decreases the performance dramatically, with performance dropping to near chance when it is ablated near the final layers.

\section{Ensemble replacement}\label{app:replacement}
We repeat the ensemble replacement experiments across all models and prompt templates using the following intervention
\begin{equation}
    \widetilde{\bm \phi}^r(A,B,C) = \bm \mu^r+\bm h^r_S,
\end{equation}
for all $ A,B,C \in \mathcal{X}$ and $S \in \{AB,BC,CA,ABC\}$.

Figures~\ref{fig:app-top-1-interaction-replacements} and~\ref{fig:app-top-3-interaction-replacements} show the corresponding results using top-1 and top-3 accuracy respectively. Retaining \(H_{BC}^r\) or \(H_{CA}^r\) together with the mean performs similarly to the \(\mu^r\)-only baseline. Retaining \(H_{AB}^r\) preserves performance in intermediate layers but becomes insufficient later in depth, while retaining \(H_{ABC}^r\) becomes effective at later layers and can exceed the unmodified baseline.

Together with Appendix~\ref{app:ablation}, these results show that the transition from \(H_{AB}^r\) to \(H_{ABC}^r\) is reproduced by both ablation and replacement interventions across models and prompt templates.

\section{Steering}\label{app:steering}
We report steering results across all models and prompt templates using the averaged vectors introduced in Section~\ref{sec:steering}. For $S\in\{\gamma,D\}$, we intervene according to
\begin{equation}
    \bm{\widetilde \phi}^{r}(A,B,C)
    =
    \bm\phi^r(A,B,C)
    +
    \alpha
    \left(
        \bm{\bar h}^r_{S+\delta}
        -
        \bm{\bar h}^r_S
    \right).
\end{equation}

To compare steering effects across models, domains, and layers, we report the $\mathrm{normalized\; median\;} \Delta m_3$. We normalize by
\begin{equation}
    \Delta m_3^{\mathrm{max}}
    =
    \mathop{\mathrm{max}}_{S, \delta,r,\ell}
    \left|
        \Delta m_3^{r,\ell}(\bm{\bar h}^r_{S+\delta} - \bm{\bar h}^r_S)
    \right|.
\end{equation}
Figure~\ref{fig:app-steering-pooled} shows the resulting steering profiles at $\alpha=1$. Steering with $\bm{\bar h}_\gamma^r$ is effective in intermediate layers, whereas steering with $\bm{\bar h}_D^r$ becomes effective at later layers. These regions coincide with the layers where $H_{AB}^r$ and $H_{ABC}^r$ are found to be causally relevant in Appendices~\ref{app:ablation} and~\ref{app:replacement}.

\subsection{Sensitivity to steering strength}

The experiments in the main text use $\alpha=1$. To test whether the observed steering effects depend strongly on this choice, Figure~\ref{fig:app-alpha-steering} shows the $\mathrm{median\;} \Delta m_3$ as a function of $\alpha$ at representative layers for $\bm{\bar h}_\gamma^r$ and $\bm{\bar h}_D^r$. In both cases, the steering effect varies smoothly with $\alpha$.

\begin{figure}
    \centering
    \includegraphics[width=\linewidth]{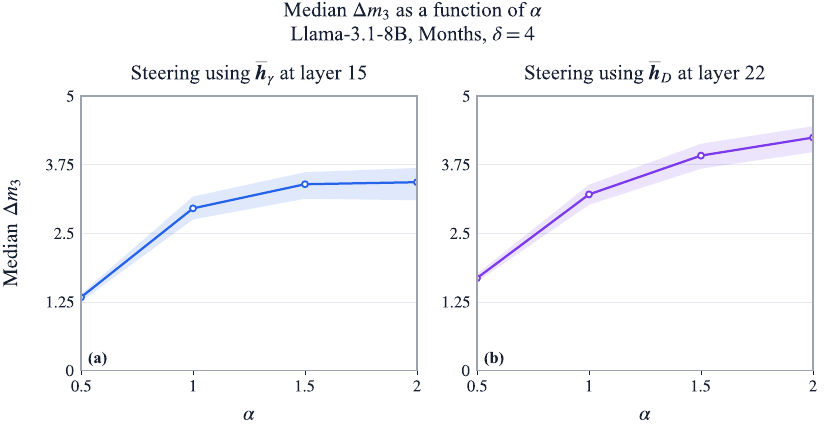}
    \caption{Median $\Delta m_3$ as a function of steering strength $\alpha$ for the averaged steering vectors. \textbf{(a)} Steering with $\bm{\bar h}_\gamma^r$ at layer 15. \textbf{(b)} Steering with $\bm{\bar h}_D^r$ at layer 22. Both panels use the months prompt template with Llama-3.1-8B.}
    \label{fig:app-alpha-steering}
\end{figure}

\subsection{Symmetrized pointwise steering}

The averaged vectors above depend only on the relation label $\gamma$ or $D$. As a complementary intervention, we construct prompt-specific counterfactual interaction vectors by averaging over the coordinate changes that produce the same shift in the corresponding relation.

For $H_{AB}^r$, both $B\to B+\delta$ and $A\to A-\delta$ change $\gamma\to\gamma+\delta$ and therefore $D\to D+\delta$. We define
\begin{equation}
    \widetilde{\bm h}^r_{AB\mid\delta}
    =
    \frac{1}{2}
    \left(
        \bm h^r_{A,B+\delta}
        +
        \bm h^r_{A-\delta,B}
    \right).
\end{equation}
Similarly, each of $A\to A-\delta$, $B\to B+\delta$, and $C\to C+\delta$ changes $D$ to $D+\delta$, so we define
\begin{equation}
    \widetilde{\bm h}^r_{ABC\mid\delta}
    =
    \frac{1}{3}
    \left(
        \bm h^r_{A-\delta,B,C}
        +
        \bm h^r_{A,B+\delta,C}
        +
        \bm h^r_{A,B,C+\delta}
    \right).
\end{equation}
When $\delta=0$, these expressions reduce to the original interaction vectors. We steer using
\begin{equation}
    \bm{\widetilde \phi}^{r}(A,B,C)
    =
    \bm\phi^r(A,B,C)
    +
    \alpha
    \left(
        \widetilde{\bm h}^r_{S\mid\delta}
        -
        \bm h^r_S
    \right),
\end{equation}
where $S\in\{AB,ABC\}$.

Figure~\ref{fig:app-steering-symmetrized} shows the $\mathrm{normalized\; median\;} \Delta m_3$ obtained using this intervention. The same layerwise pattern appears as with the averaged steering vectors: steering using $\widetilde{\bm h}^r_{AB\mid\delta}$ is effective in intermediate layers, whereas steering using $\widetilde{\bm h}^r_{ABC\mid\delta}$ becomes effective at later layers.

\section{Cross-domain transfer}\label{app:cross-domain}

% \subsection{Direct transfer}

We report the cross-domain transfer experiments across models and domains. Figures~\ref{fig:app-top-1-cross-domain-ablations} and~\ref{fig:app-top-3-cross-domain-ablations} show top-1 and top-3 accuracy after transplanting $H_{AB}^r$ from one domain into another following ablation of the corresponding domain-1 interaction. Figures~\ref{fig:app-top-1-cross-domain-replacement} and~\ref{fig:app-top-3-cross-domain-replacement} show the corresponding replacement experiments, in which the domain-1 mean is retained together with the domain-2 interaction. Figure~\ref{fig:app-cross-domain-steering} shows cross-domain steering using $\bar{\bm h}_\gamma$, including $\delta=0$ as a control.

Direct transplantation of $H_{AB}^r$ generally recovers part of the performance lost under ablation, although the degree of recovery depends on the source and target domains. In particular, transfer between the months and hours domains is asymmetric for the Llama models: months$\to$hours decreases performance relative to the unmodified model, whereas hours$\to$months improves the performance (Figures~\ref{fig:app-top-1-cross-domain-ablations} and~\ref{fig:app-top-3-cross-domain-ablations}). In contrast, direct transfer of $H_{ABC}^r$ is substantially less consistent across domains (Figures~\ref{fig:app-top-3-cross-domain-habc-ablation} and~\ref{fig:app-top-3-cross-domain-habc-replacement}).

\subsection{Alignment of interaction subspaces}

The inconsistent direct transfer of $H_{ABC}^r$ may arise because different domains use different residual-stream subspaces for the same underlying interaction structure. We therefore align the interaction subspaces before transplantation. We perform the alignment in a $q=N-1$ dimensional subspace and divide the replicates into a training set (70\%), denoted by $r_t$, and a held-out test set (30\%), denoted by $r_e$.

For domain $c\in\{1,2\}$, let
\begin{equation}
W_{AB}^{(c)}
=
\left[\bm h_{AB}^{r_t,(c)}\right]
\in
\mathbb{R}^{tN^2\times d_{\mathrm{model}}},
\end{equation}
and similarly,
\begin{equation}
W_{ABC}^{(c)}
=
\left[\bm h_{ABC}^{r_t,(c)}\right]
\in
\mathbb{R}^{tN^3\times d_{\mathrm{model}}}.
\end{equation}
Both $W_{AB}^{(c)}$ and $W_{ABC}^{(c)}$ have zero mean by Lemma~\ref{lem:zero-mean}.

We describe the alignment procedure for $W_{AB}$; the same procedure is applied to $W_{ABC}$. Consider transplanting the domain-2 interaction into domain-1. For each domain, we first compute
\begin{equation}
W_{AB}^{(c)}
=
U^{(c)}
\Sigma^{(c)}
\left(V^{(c)}\right)^T,
\end{equation}
where the columns of $V^{(c)}$ are the right singular vectors. Let
$V_q^{(c)}\in\mathbb{R}^{d_{\mathrm{model}}\times q}$ contain the leading
$q=N-1$ right singular vectors. We project the interaction ensembles into these subspaces:
\begin{equation}
Z_{AB}^{(c)}
=
W_{AB}^{(c)}V_q^{(c)}
\in
\mathbb{R}^{tN^2\times q}.
\end{equation}

We then align the domain-2 coordinates to domain-1 by solving the orthogonal Procrustes problem
\begin{equation}
R_{AB}^*
=
\underset{R^TR=\mI_q}{\operatorname{arg\;min}}
\left\lVert
Z_{AB}^{(2)}R-Z_{AB}^{(1)}
\right\rVert_F^2,
\end{equation}
where $R_{AB}^*\in\mathbb{R}^{q\times q}$ is the orthogonal transformation that best aligns the two coordinate systems.

Transforming a domain-2 interaction vector into the domain-1 residual-stream basis then consists of projection onto the domain-2 subspace, alignment in the $q$-dimensional coordinate system, and reconstruction in the domain-1 basis. We therefore define
\begin{equation}
Q
=
V_q^{(2)}
R_{AB}^*
\left(V_q^{(1)}\right)^T
\in
\mathbb{R}^{d_{\mathrm{model}}\times d_{\mathrm{model}}}.
\end{equation}

We evaluate the learned alignment only on held-out replicates. For the aligned ablation-transplant experiment, we use
\begin{equation}
\label{eq:aligned-ablation}
\widetilde{\bm{\phi}}^{r_e,(1)}(A,B,C)
=
\bm{\phi}^{r_e,(1)}(A,B,C)
-
\bm h_{AB}^{r_e,(1)}
+
\bm h_{AB}^{r_e,(2)}Q.
\end{equation}
For the corresponding aligned replacement experiment, we use
\begin{equation}
\widetilde{\bm{\phi}}^{r_e,(1)}(A,B,C)
=
\bm{\mu}^{r_e,(1)}
+
\bm h_{AB}^{r_e,(2)}Q.
\end{equation}

Figures~\ref{fig:app-top-1-aligned-ablation} and~\ref{fig:app-top-3-aligned-ablation} show the aligned ablation-transplant results for both $H_{AB}^r$ and $H_{ABC}^r$, while Figures~\ref{fig:app-top-1-aligned-replacement} and~\ref{fig:app-top-3-aligned-replacement} show the corresponding replacement experiments. Alignment improves cross-domain transfer for both interaction ensembles and produces a particularly large improvement for $H_{ABC}^r$, whose direct transfer is otherwise inconsistent across domains. This suggests that part of the failure of direct $H_{ABC}^r$ transplantation can be explained by differences in the residual-stream subspaces used by different domains.
% Ablation figures
\begin{figure}
    \centering
    \includegraphics[width=\linewidth]{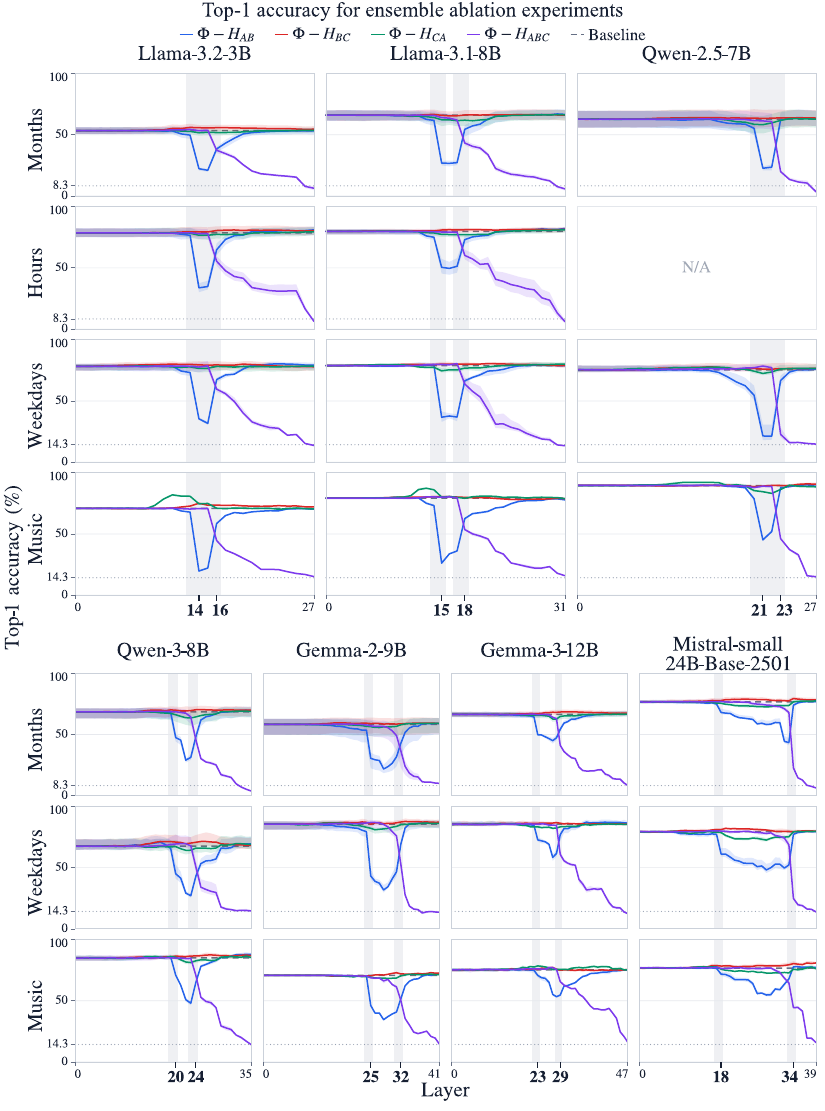}
    \caption{Top-1 accuracy after ablating an interaction ensemble at the last-token position, shown as a function of layer. The vertical shaded bands mark the model-specific intermediate layers where ablating $H_{AB}^r$ produces its largest drop in performance. Across models and prompts, ablating $H_{BC}^r$ or $H_{CA}^r$ has little effect on accuracy, whereas ablating $H_{AB}^r$ or $H_{ABC}^r$ substantially degrades performance. The effect of ablating $H_{AB}^r$ occurs earlier in depth than the effect of ablating $H_{ABC}^r$.}
    \label{fig:app-top-1-interaction-ensembles}
\end{figure}

\begin{figure}
    \centering
    \includegraphics[width=\linewidth]{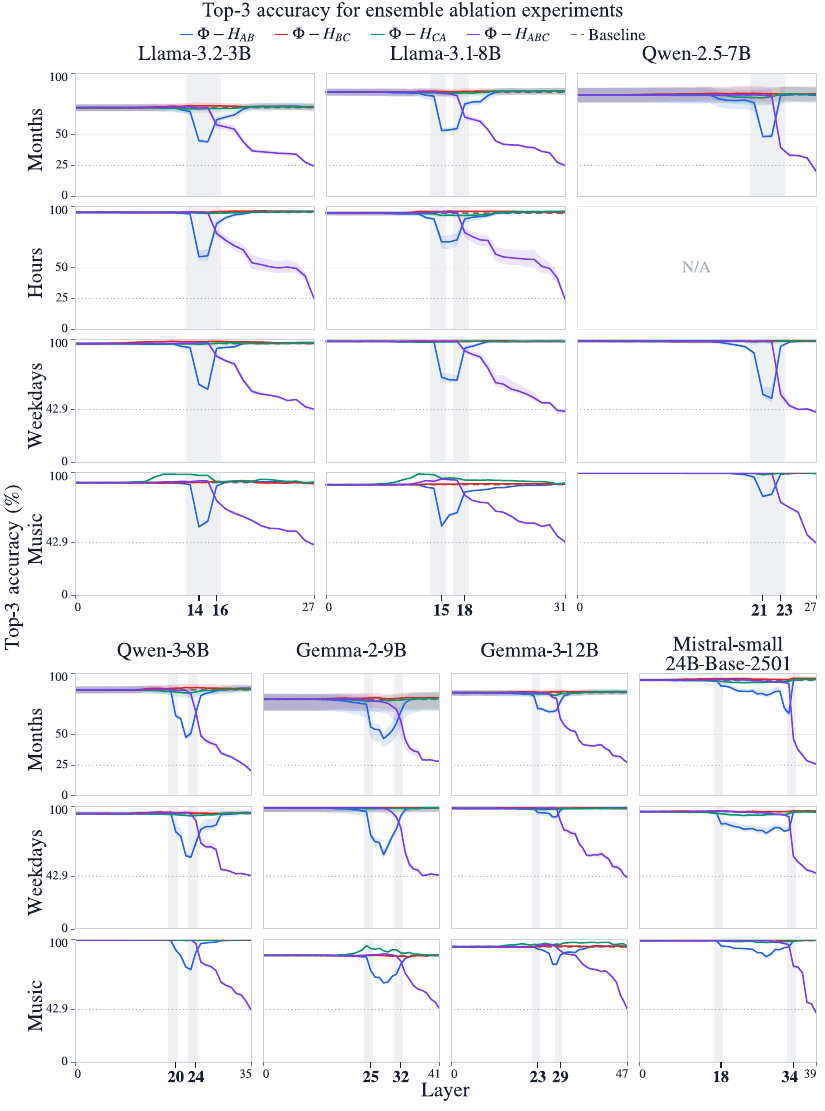}
    \caption{Top-3 accuracy after ablating each interaction ensemble at the last-token position, shown as a function of layer. Across models and prompt templates, ablating \(H_{BC}^r\) or \(H_{CA}^r\) has little effect on performance, whereas ablating \(H_{AB}^r\) or \(H_{ABC}^r\) substantially decreases accuracy. The effect of ablating \(H_{AB}^r\) consistently occurs earlier in depth than the effect of ablating \(H_{ABC}^r\).}
    \label{fig:app-top-3-interaction-ensembles}
\end{figure}

% Replacement figures
\begin{figure}
    \centering
    \includegraphics[width=\linewidth]{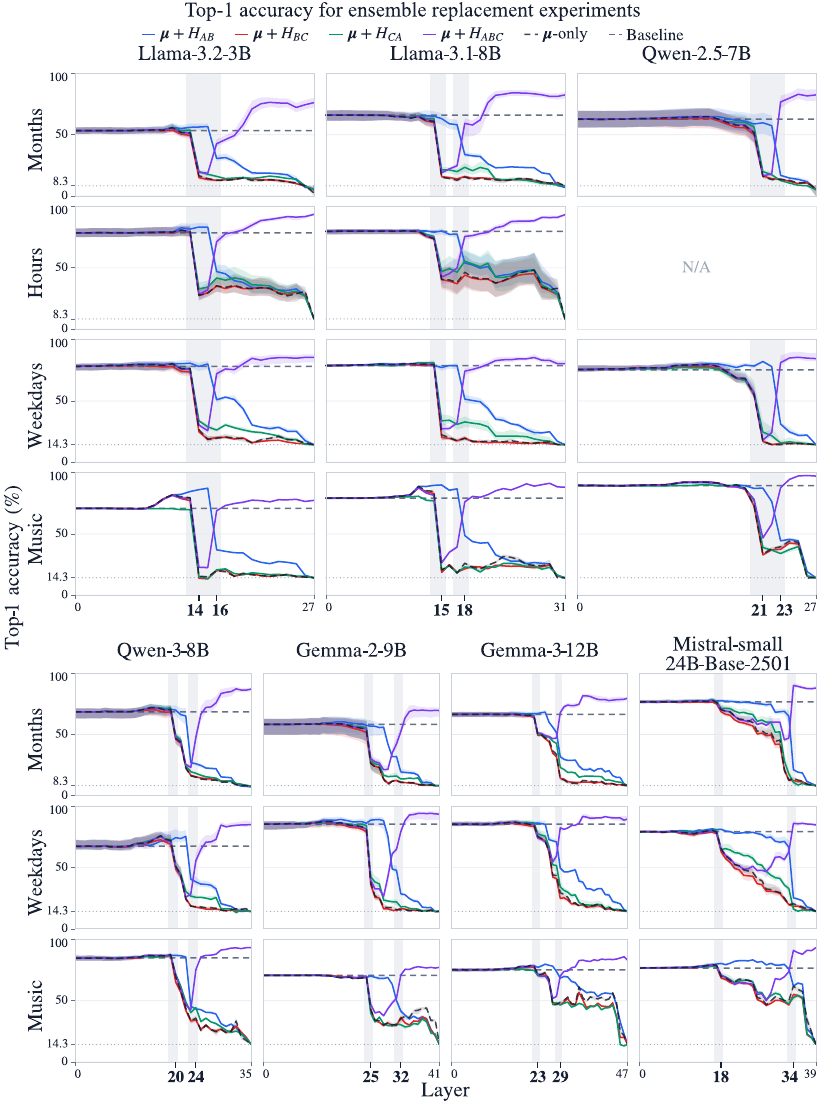}
    \caption{Top-1 accuracy after replacing the activation of the last token with only the mean $\bm \mu^r$ and an interaction ensemble, shown as a function of layer. The vertical shaded bands mark the intermediate layers where ablating $H_{AB}^r$ produced the largest drop in performance, as shown in Figures~\ref{fig:app-top-1-interaction-ensembles} and~\ref{fig:app-top-3-interaction-ensembles}. Replacing with $H_{BC}^r$, $H_{CA}^r$, or $H_{ABC}^r$ substantially degrades performance at the first vertical band, whereas replacing with $H_{AB}^r$ does not affect performance until the second vertical band. At later layers, retaining \(H_{ABC}^r\) preserves performance and can exceed the unmodified baseline.}
    \label{fig:app-top-1-interaction-replacements}
\end{figure}

\begin{figure}
    \centering
    \includegraphics[width=\linewidth]{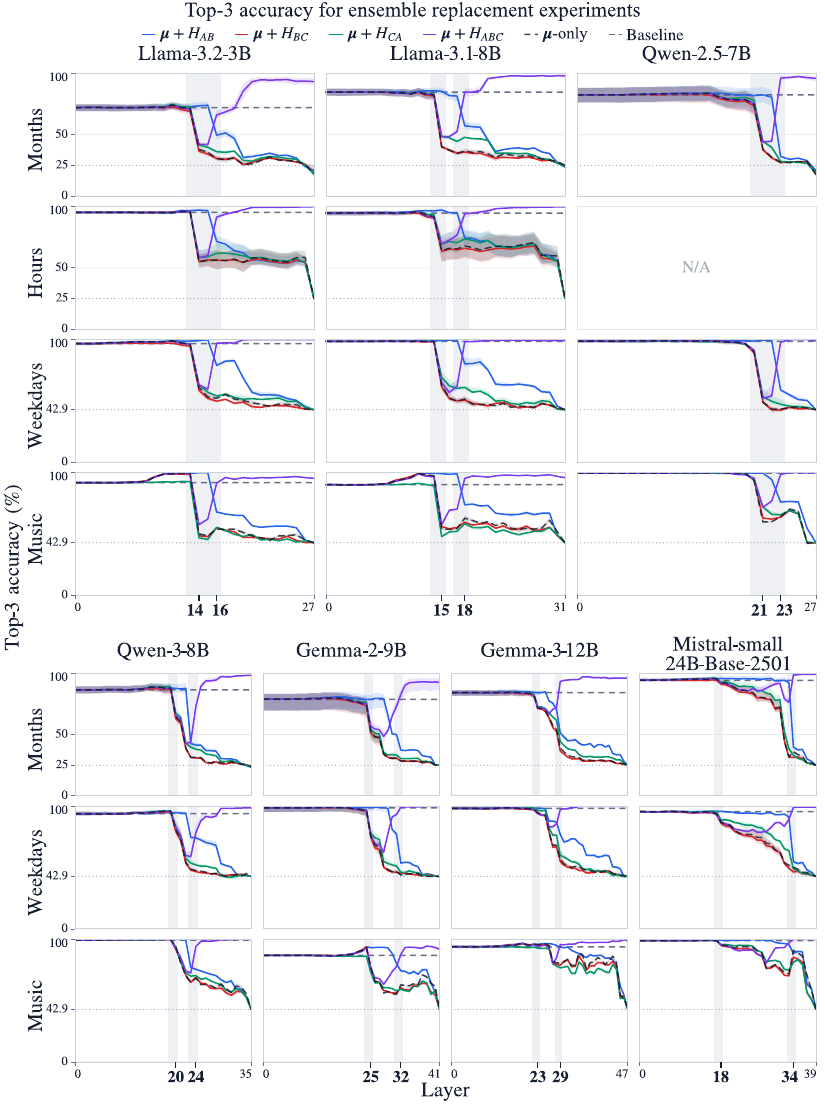}
    \caption{Top-3 accuracy after replacing the last-token activation with the ensemble mean \(\bm \mu^r\) and one interaction term. Across models and prompt templates, retaining \(H_{AB}^r\) preserves performance in intermediate layers but becomes insufficient later in depth, while retaining \(H_{ABC}^r\) becomes effective at later layers and can outperform the unmodified baseline. Retaining \(H_{BC}^r\) or \(H_{CA}^r\) generally degrades the performance beginning from the first vertical band.}
    \label{fig:app-top-3-interaction-replacements}
\end{figure}

% Steering figures
\begin{figure}
    \centering
    \includegraphics[width=\linewidth]{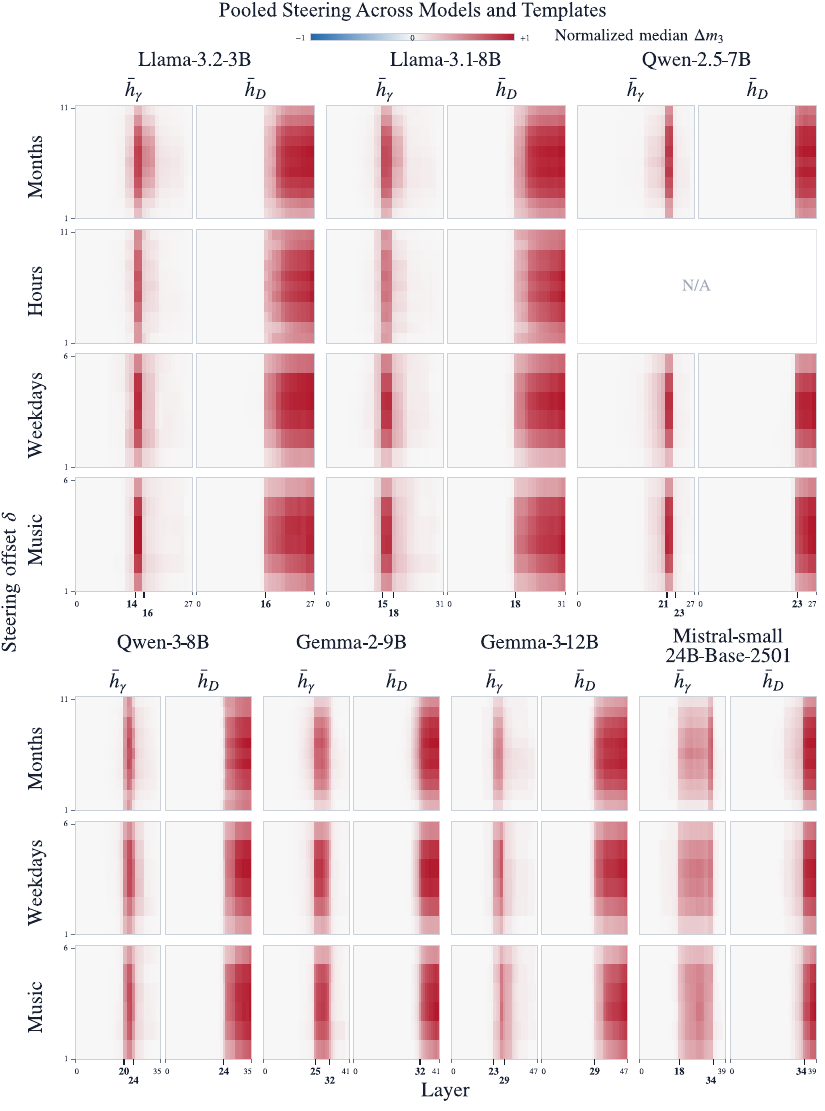}
    \caption{Heatmap of steering using $\bar{\bm h}_\gamma$ and $\bar{\bm h}_D$ across every model and prompt. Color indicates $\mathrm{normalized\;median\;}\Delta m_3$. Steering with $\bar{\bm h}_\gamma$ is most effective in intermediate layers, with boldface layer labels marking the layers where $H_{AB}^r$ was found to be causally relevant. At later layers, steering with $\bar{\bm h}_D$ becomes effective.}
    \label{fig:app-steering-pooled}
\end{figure}

\begin{figure}
    \centering
    \includegraphics[width=\linewidth]{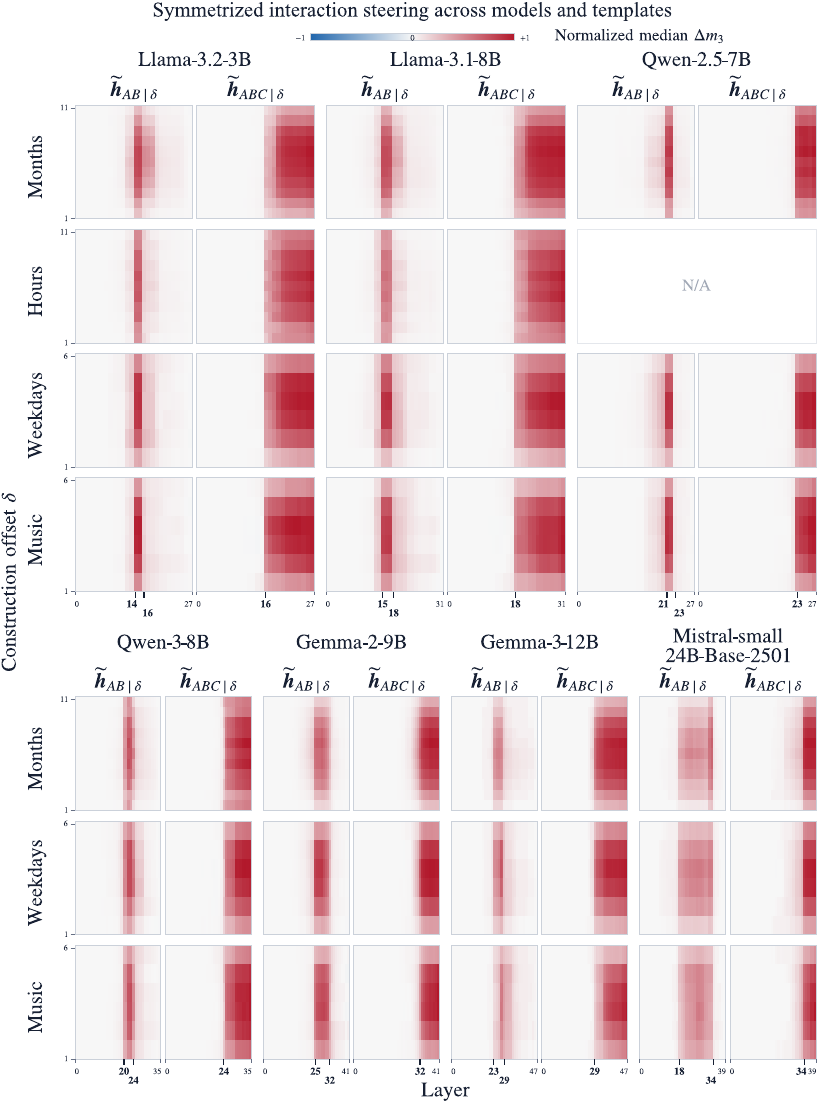}
    \caption{Heatmap of steering using $\widetilde{\bm h}_{AB|\delta}$ and $\widetilde{\bm h}_{ABC|\delta}$ across every model and prompt. Color indicates $\mathrm{normalized\;median\;}\Delta m_3$. As with the steering results in Figure~\ref{fig:app-steering-pooled}, steering using $\widetilde{\bm h}_{AB|\delta}$ is effective in intermediate layers, while steering using $\widetilde{\bm h}_{ABC|\delta}$ becomes effective at later layers.}
    \label{fig:app-steering-symmetrized}
\end{figure}

\begin{figure}
    \centering
    \includegraphics[width=\linewidth]{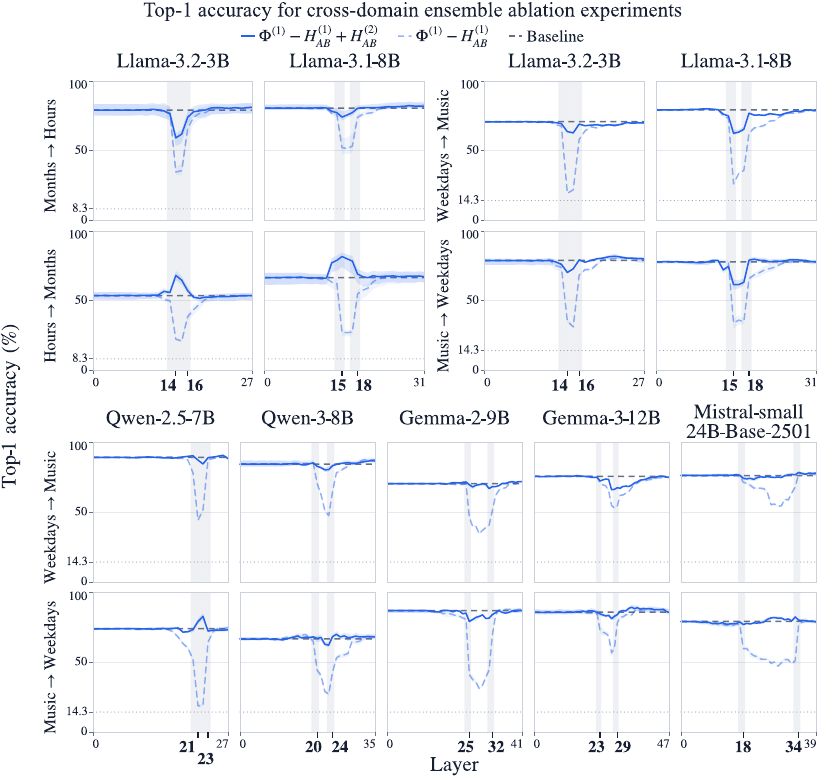}
    \caption{Top-1 accuracy for cross-domain ablation transplant of \(H_{AB}^r\). Solid curves show performance after replacing an ablated domain-1 interaction vector with the corresponding domain-2 vector; dashed blue curves show ablation without transplantation and dashed black curves show the unmodified baseline. Successful transfer is indicated by recovery from the dashed blue ablation curve toward the baseline. Direct transplantation generally recovers performance, although the amount of recovery depends on the model and cross-domain pair.}
    \label{fig:app-top-1-cross-domain-ablations}
\end{figure}

\begin{figure}
    \centering
    \includegraphics[width=\linewidth]{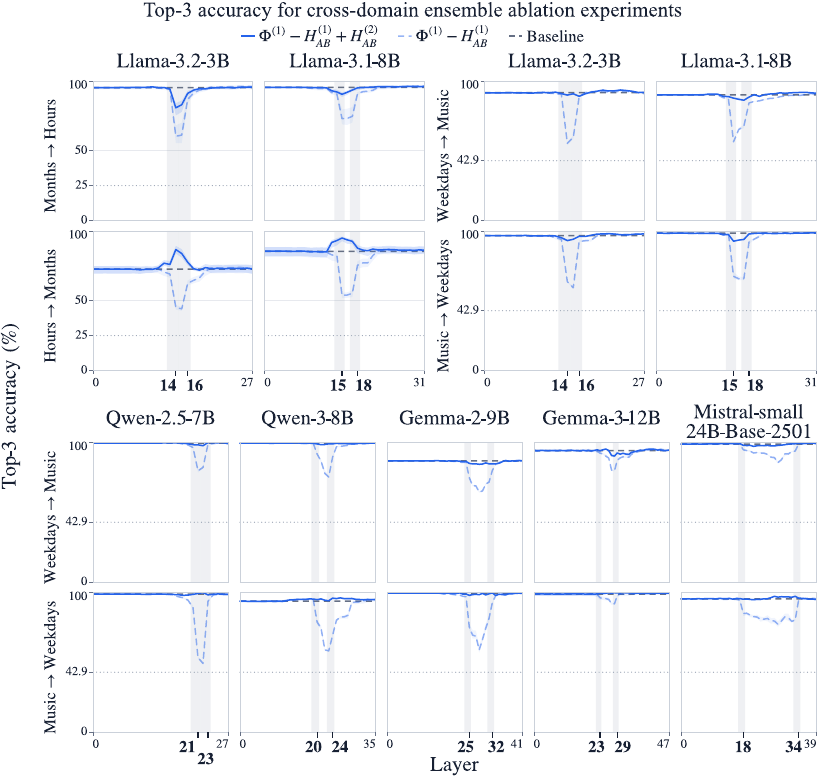}
    \caption{Top-3 accuracy for direct cross-domain ablation transplant of \(H_{AB}^r\). As in Figure~\ref{fig:app-top-1-cross-domain-ablations}, transplantation generally recovers part of the performance lost under ablation, with variation across model and cross-domain pairs.}
    \label{fig:app-top-3-cross-domain-ablations}
\end{figure}

\begin{figure}
    \centering
    \includegraphics[width=\linewidth]{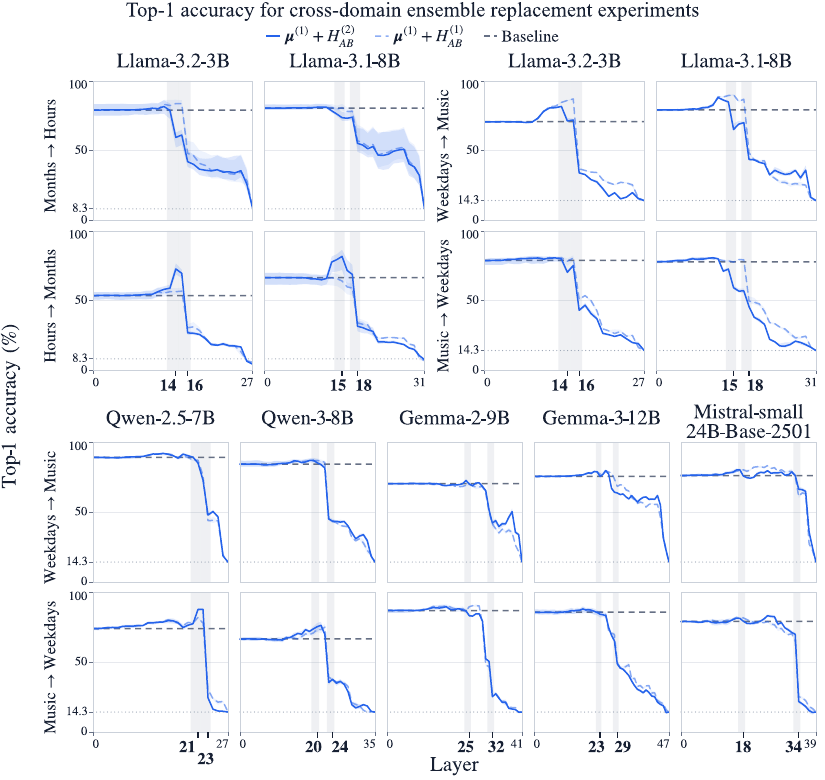}
    \caption{Top-1 accuracy for cross-domain replacement using \(H_{AB}^r\). Solid curves retain the domain-1 mean together with the domain-2 interaction vector; dashed blue curves show the corresponding within-domain replacement experiment. Solid and dashed trajectories follow each other, which indicates that the domain-2 interaction can substitute for the domain-1 interaction in downstream computation.}
    \label{fig:app-top-1-cross-domain-replacement}
\end{figure}

\begin{figure}
    \centering
    \includegraphics[width=\linewidth]{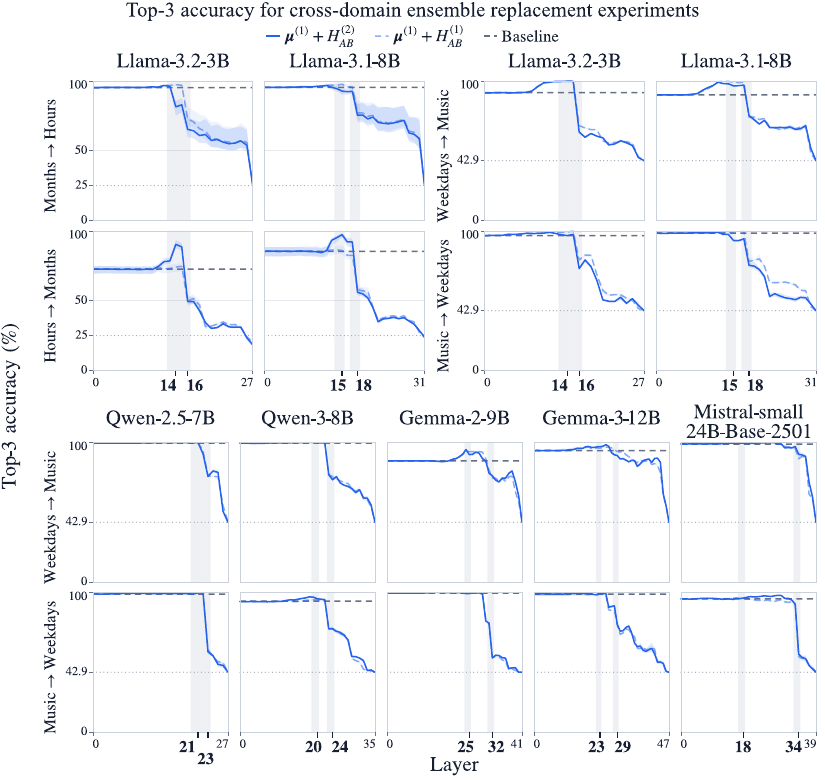}
    \caption{Top-3 accuracy for direct cross-domain replacement using \(H_{AB}^r\). Across all models and cross-domain pairs, the transplanted domain-2 interaction reproduces the layerwise behavior of the corresponding within-domain replacement.}
    \label{fig:app-top-3-cross-domain-replacement}
\end{figure}

\begin{figure}
    \centering
    \includegraphics[width=\linewidth]{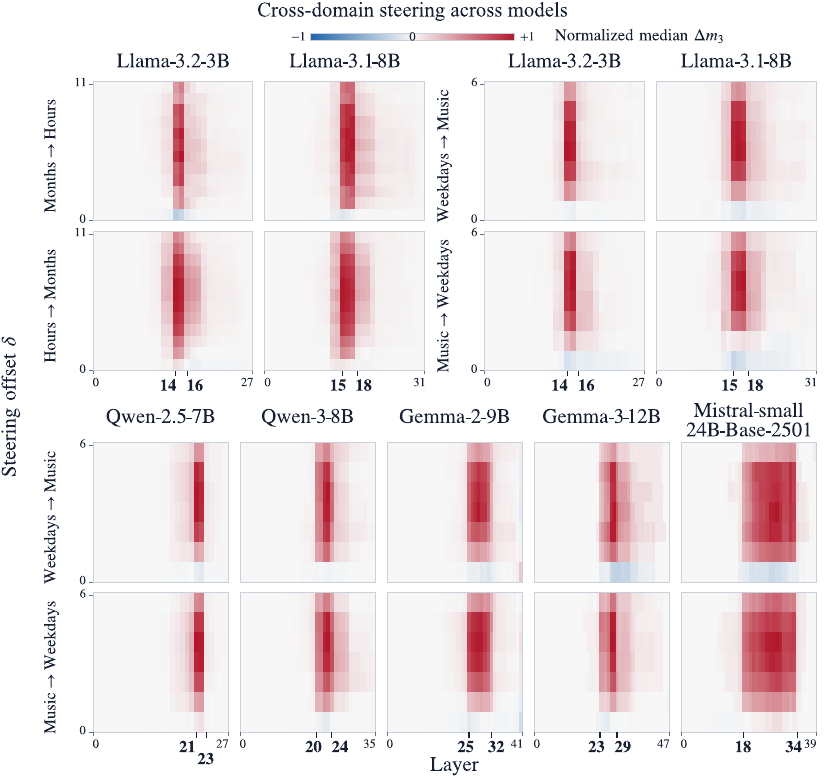}
    \caption{Cross-domain steering using vectors extracted from domain-2. Color indicates normalized median \(\Delta m_3\). Steering is strongest in the same intermediate layers where \(H_{AB}^r\) is causally relevant. The \(\delta=0\) case provides a non-trivial control: replacing the domain-1 relation vector with the corresponding domain-2 vector should preserve the original answer if the relation representation transfers across domains.}
    \label{fig:app-cross-domain-steering}
\end{figure}

\begin{figure}
    \centering
    \includegraphics[width=\linewidth]{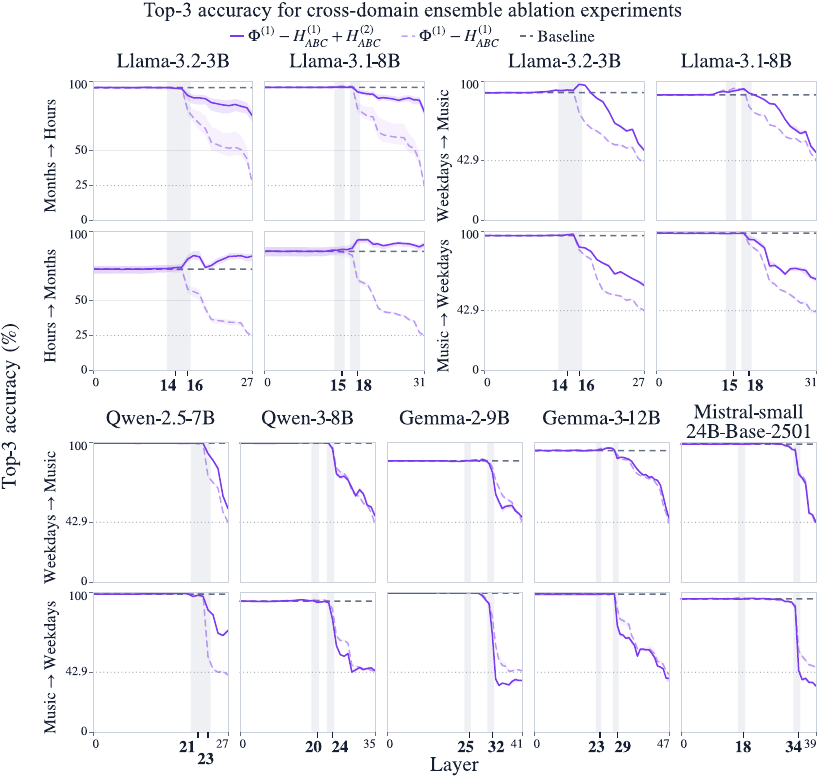}
    \caption{Top-3 accuracy for cross-domain ablation transplant of \(H_{ABC}^r\). Unlike \(H_{AB}^r\), direct transfer of the three-way interaction is not consistently successful across domains. Recovery in performance is observed for the months\(\leftrightarrow\)hours transfers in the Llama models, but most other model–domain pairs show little improvement over ablation.}
    \label{fig:app-top-3-cross-domain-habc-ablation}
\end{figure}

\begin{figure}
    \centering
    \includegraphics[width=\linewidth]{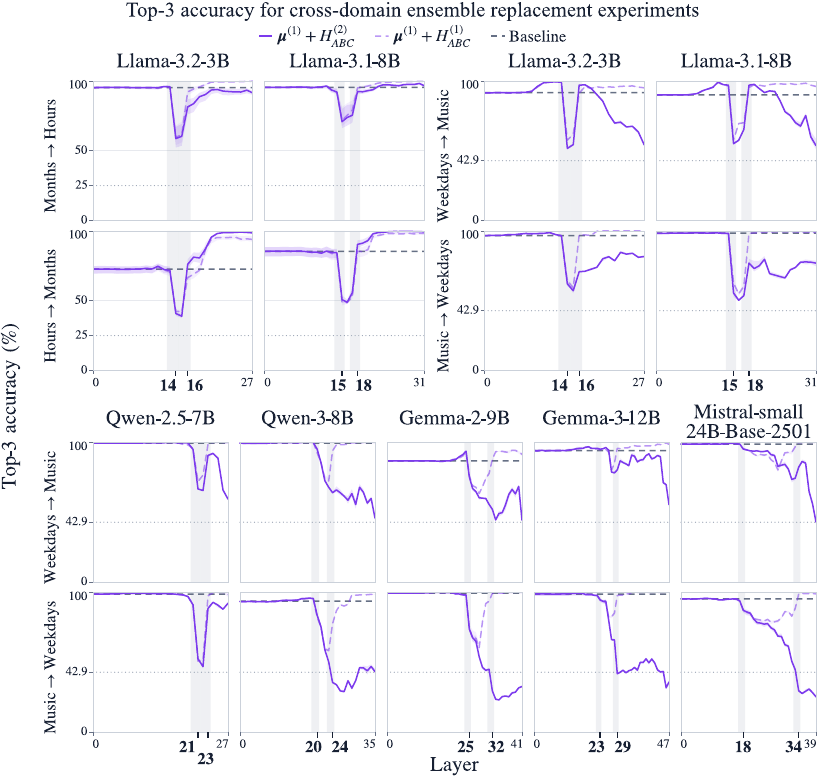}
    \caption{Top-3 accuracy for cross-domain replacement using \(H_{ABC}^r\). Consistent with Figure~\ref{fig:app-top-3-cross-domain-habc-ablation}, the ability of a domain-2 \(H_{ABC}^r\) to substitute for its domain-1 counterpart depends strongly on the model and domain pair.}
    \label{fig:app-top-3-cross-domain-habc-replacement}
\end{figure}

\begin{figure}
    \centering
    \includegraphics[width=\linewidth]{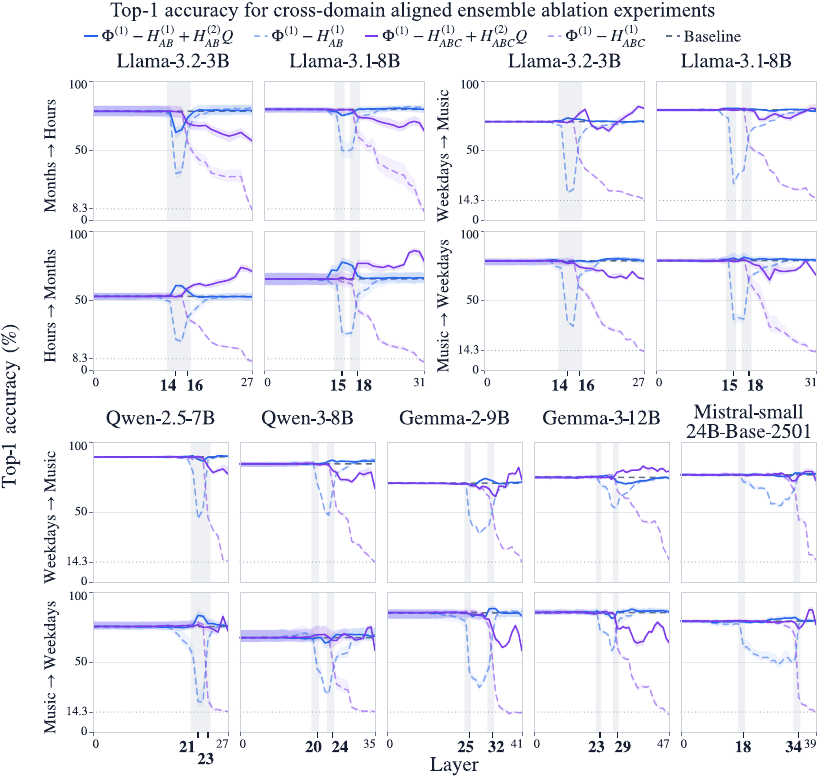}
    \caption{Top-1 accuracy for cross-domain ablation transplant after aligning the interaction subspaces using training replicates and evaluating on held-out replicates. Solid curves show aligned domain-2 interaction vectors transplanted into domain 1; dashed curves show the corresponding ablation and unmodified baselines. Alignment improves transfer for both \(H_{AB}^r\) and \(H_{ABC}^r\), although the degree of recovery varies across models and domain pairs.}
    \label{fig:app-top-1-aligned-ablation}
\end{figure}

\begin{figure}
    \centering
    \includegraphics[width=\linewidth]{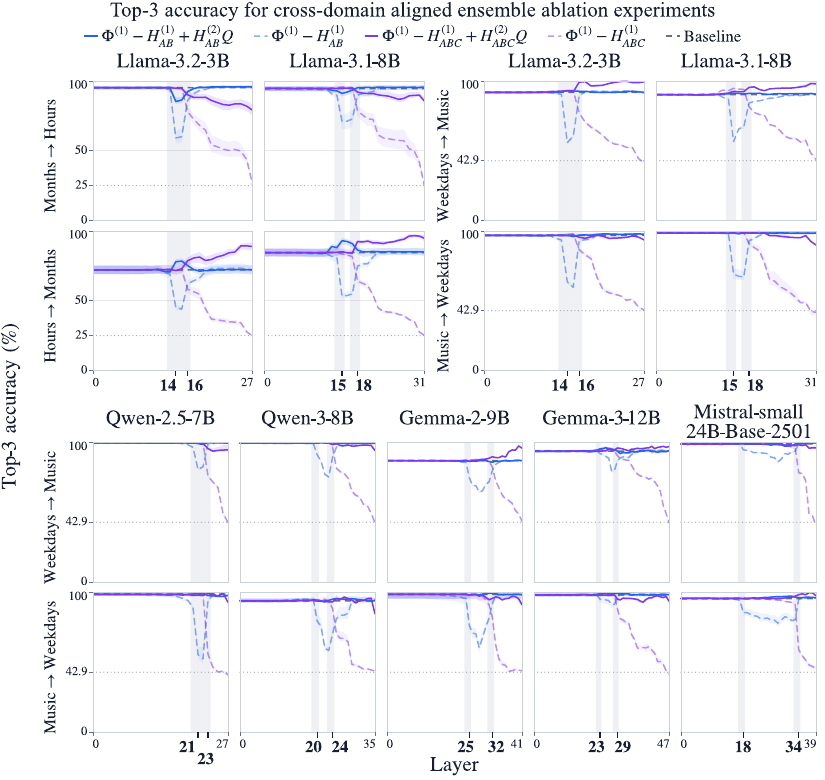}
    \caption{Top-3 accuracy for cross-domain ablation transplant after interaction-subspace alignment. Alignment improves recovery over the ablated baseline for both \(H_{AB}^r\) and \(H_{ABC}^r\), with the largest qualitative change occurring for \(H_{ABC}^r\), whose direct transfer was often weak.}
    \label{fig:app-top-3-aligned-ablation}
\end{figure}

\begin{figure}
    \centering
    \includegraphics[width=\linewidth]{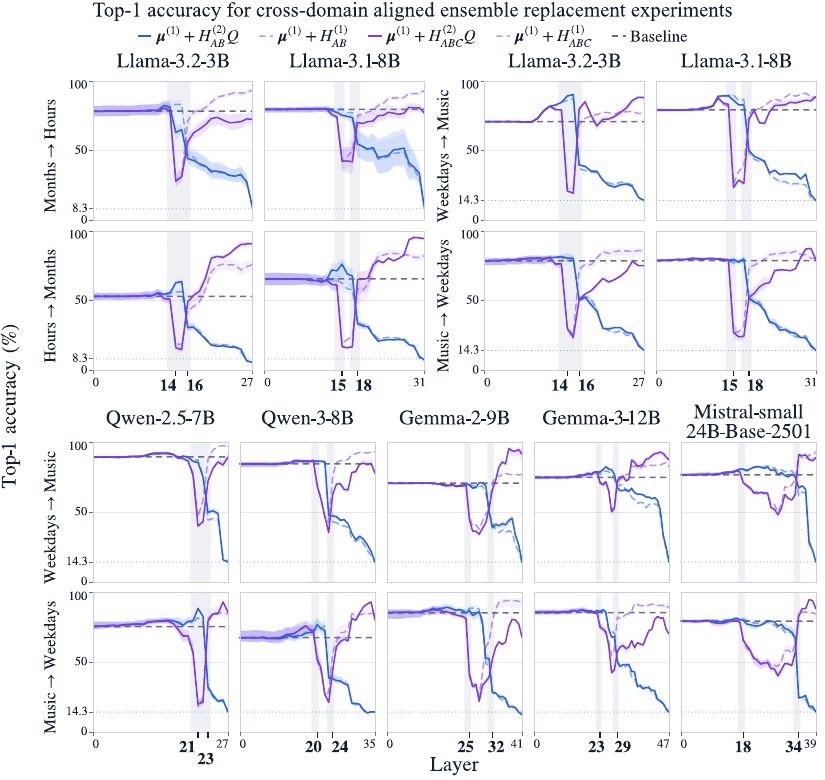}
    \caption{Top-1 accuracy for cross-domain replacement after aligning the interaction subspaces. Solid curves retain the domain-1 mean together with an aligned domain-2 interaction vector; dashed curves show the corresponding within-domain replacement. Alignment allows the transplanted interaction to reproduce the within-domain replacement trajectory more closely, particularly for \(H_{ABC}^r\).}
    \label{fig:app-top-1-aligned-replacement}
\end{figure}

\begin{figure}
    \centering
    \includegraphics[width=\linewidth]{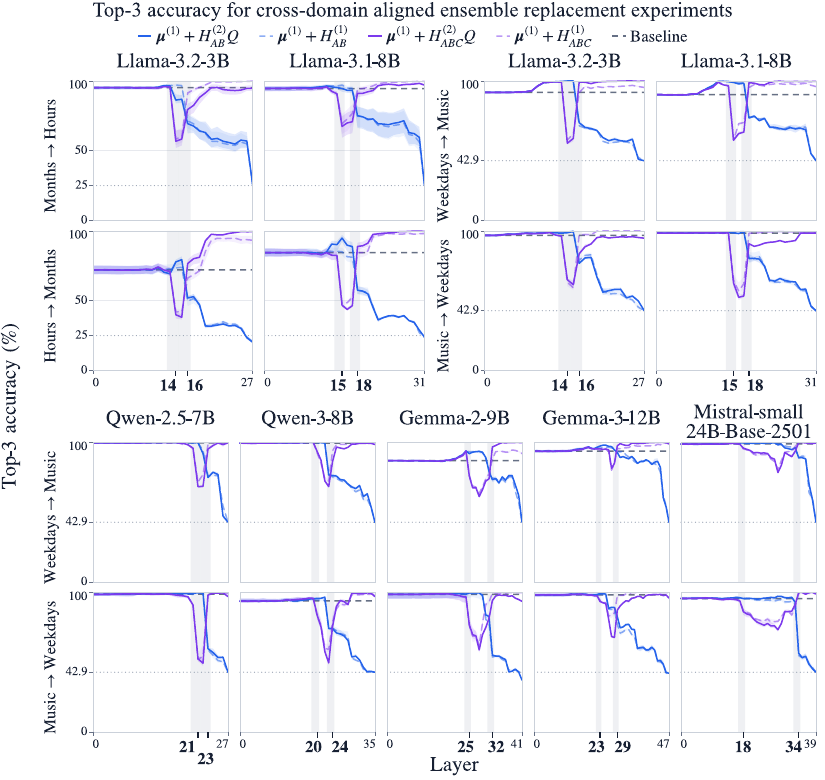}
    \caption{Top-3 accuracy for cross-domain replacement after interaction-subspace alignment. As in the top-1 results in Figure~\ref{fig:app-top-1-aligned-replacement}, alignment improves agreement between cross-domain and within-domain replacement, especially for \(H_{ABC}^r\), although the degree of recovery depends on the model and domain.}
    \label{fig:app-top-3-aligned-replacement}
\end{figure}
\end{document}